\providecommand{\tabularnewline}{\\}
\definecolor{darkred}{rgb}{0.8,0,0} % Link color
\g@addto@macro \normalsize {%
 \setlength\abovedisplayskip{4pt plus 2pt minus 2pt}%
 \setlength\belowdisplayskip{4pt plus 2pt minus 2pt}%
}
\DeclareRobustCommand{\VAN}[2]{#2}
\let\orgdescriptionlabel\descriptionlabel
\renewcommand*{\descriptionlabel}[1]{%
  \let\orglabel\label
  \let\label\@gobble
  \phantomsection
  \edef\@currentlabel{#1}%
  \let\label\orglabel
  \orgdescriptionlabel{#1}%
}
\newcommand*{\textlabel}[2]{%
  \edef\@currentlabel{#1}% Set target label
  \phantomsection% Correct hyper reference link
  #1\label{#2}% Print and store label
}
\begin{document}
\setstretch{1}
\title{Awareness Logic: A Kripke-based Rendition of the Heifetz-Meier-Schipper Model}
\author{Gaia Belardinelli and Rasmus K. Rendsvig \institute{Center for Information and Bubble Studies, University of Copenhagen \\ \email{\{belardinelli,rasmus\}@hum.ku.dk}}}
\maketitle
\begin{abstract}
Heifetz, Meier \& Schipper (HMS) present a lattice model of awareness.
The HMS model is syntax-free, which precludes the simple option to
rely on formal language to induce lattices, and represents uncertainty
and unawareness with one entangled construct, making it difficult
to assess the properties of either. Here, we present a model based
on a lattice of Kripke models, induced by atom subset inclusion, in
which uncertainty and unawareness are separate. We show the models
to be equivalent by defining transformations between them which preserve
formula satisfaction, and obtain completeness through our and HMS'
results.
\end{abstract}

\section{Introduction}

Awareness has been studied with vigor in logic and game theory since
its first formal treatment by Halpern and Fagin in \cite{HalpernFagin88}.
In these fields, awareness is added as a complement to uncertainty
in models for knowledge and rational interaction. In short, where
uncertainty concerns an agent's ability to distinguish possible states
of the world based on its available information, awareness concerns
the agent's ability to even contemplate aspects of a state, where
such inability stems from the \emph{unawareness }of the concepts that
constitute said aspects. Thereby, models that include awareness avoid
problems of logical omniscience (at least partially) and allows modeling
game theoretic scenarios where the possibility of some action may
come as an utter surprise.

To model awareness, the seminal \cite{HalpernFagin88} introduces
the Logic of General Awareness (LGA), taking a syntax-based approach:
an agent $a$'s awareness in state $w$ is given by an \emph{awareness
function }assigning $(a,w)$ a set of formulas. This approach has
since been inherited by a multitude of models.

In contrast, Heifetz, Meier and Schipper (HMS) construct a syntax-free
framework \cite{HMS2006}, which is the main topic of this paper.
In their \emph{unawareness frames}, both ``atomic'' and epistemic
events are defined without any appeal to atomic propositions or other
syntax.

The backbone of an unawareness frame is a complete lattice of state-spaces
$(\mathcal{S},\preceq)$, with the intuition that the higher a space
is, the richer the ``vocabulary'' it has to describe its states.
Since the approach is syntax-free, this intuition is not modeled using
a formal language. It is represented using $\preceq$ and a family
of maps $r_{S}^{S'}$ which projects state-space $S'$ down to $S$,
with $r_{S}^{S'}(s)$ interpreted as the representation of $s$ in
the more limited vocabulary available in $S$. Uncertainty and unawareness
are represented \emph{jointly} by a \emph{possibility correspondence
$\Pi_{a}$} for each $a\in Ag$, which maps a state weakly downwards
to the set of states the agent considers possible. If the mapped-to
space is strictly less expressive, this represents that the agent
does not have full awareness of the mapped-from state.

That HMS keep their model syntax-free is motivated in part by its
applicability among economists \cite[p. 79]{HMS2006}. We think their
lattice-based conceptualization of awareness is both elegant, interesting
and intuitive\textemdash but we also find its formalization cumbersome.
Exactly the choice to go fully syntax-free robs the model of the option
to rely on formal language to induce lattices and to specify events,
resulting in constructions which we find less than very easy to follow.
This may, of course, be an artifact of us being accustomed to non-syntax-free
models used widely in epistemic logic.

Another artifact of our familiarity with epistemic logic models is
that we find HMS' joint definition of uncertainty and unawareness
difficult to relate to other formalizations of knowledge. When HMS
propose properties of their $\Pi_{a}$ maps, it is not clear to us
which aspects concern knowledge and which concern awareness. They
merge two dimensions which, to us, would be clearer if left separated.\footnote{As a reviewer points out, then HMS take \emph{explicit }knowledge
as foundational, and derive awareness from it. This makes the one-dimensional
representation justified, if not even desirable. In contrast, epistemic
logic models are standardly interpreted as taking \emph{implicit }knowledge
as foundational. We think along the second line, and add awareness
as a second dimension. We are not taking a stand on whether one interpretation
is superior, but provide results to move between them.} 

\bigskip{}
With these two motivations, this paper proposes a non-syntax-free,
Kripke model-based rendition of the HMS model. Roughly, we suggest
to start from a Kripke model $\mathtt{K}$ for a set of atoms $At$,
spawn a lattice containing restrictions of $\mathtt{K}$ to subsets
of $At$, and finally add maps $\pi_{a}$ on the lattice that take
a world to a copy of itself in a restricted model. This keeps the
epistemic and awareness dimensions separate: accessibility relations
$R_{a}$ of $\mathtt{K}$ encode epistemics while maps $\pi_{a}$
encode awareness. We show that under three assumptions on $\pi_{a}$
and when each $R_{a}$ is an equivalence relation, the result is equivalent
to the HMS model, in the sense that the two satisfy the same formulas
of the language of knowledge and awareness, defined below.

Defining an equivalent model, we do not aim to generalize that of
HMS, but we do include an additional perspective. \cite{HMS2006,Schipper2014}
argue that the HMS model allows agents to reason about their unawareness,
as possibility correspondences $\Pi_{a}$ provide them a subjective
perspective, while LGA-based approaches only present an outside perspective,
as the full model must be taken into account when assigning knowledge
and awareness.\footnote{\cite{HalpernRego2008} argues that this boils down to a difference
in philosophical interpretation.} Oppositely, Halpern and Rêgo \cite{HalpernRego2008} point out that
the HMS model includes no objective state, and so no outside perspective.
The present model has both: the starting Kripke model provides an
outsider perspective on agents' knowledge, while the submodel obtained
by following $\pi_{a}$ presents the subjective perspective. We remark
further on this below.

The paper progresses as follows. Sections \ref{sec:HMS} and \ref{sec:Kripke-Lattice-Models}
present respectively the HMS model and our rendition. Sections \ref{sec:Moving}
and \ref{sec:Language} contain our main technical results: Section
\ref{sec:Moving} introduces transformations between the two models
classes, while Section \ref{sec:Language} shows that they preserve
formula satisfaction. Section \ref{sec:Logic} presents a logic due
to HMS \cite{HMS2008}, and shows, as a corollary to our results,
that it is complete with respect to our rendition. Section \ref{sec:Concluding-Remarks}
holds concluding remarks.\smallskip{}

Throughout the paper, we assume that $Ag$ is a finite, non-empty
set of agents, and that $At$ is a countable, non-empty set of atoms.

\section{\label{sec:HMS}The HMS Model}

This section presents HMS unawareness frames \cite{HMS2006}, their
syntax-free notions of knowledge and awareness, and their augmentation
with HMS valuations, producing HMS models \cite{HMS2008}. For context,
the HMS model is a multi-agent generalization of the Modica-Rustichini
model \cite{ModicaRustichini1999} which is equivalent to Halpern's
model in \cite{halpern2001alternative}, generalized by Halpern and
Rêgo to multiple agents \cite{HalpernRego2008}, resulting in a model
equivalent to the HMS model, cf. \cite{HMS2008}. See \cite{Schipper2014}
for an extensive~review.

The following definition introduces the basic structure underlying
the HMS model, as well as the properties of the $\Pi_{a}$ map that
controls the to-be-defined notions of knowledge and awareness. The
properties are described after Definition~\ref{def:unawareness-frame}.
Following Definition~\ref{def:HMS model} of HMS models, Figure~\ref{fig:HMSmodel}
illustrates a full HMS model, including its unawareness frame.
\begin{definition}
\label{def:unawareness-frame}An \textbf{unawareness frame} is a tuple
$\mathsf{F}=(\mathcal{S},\preceq,\mathcal{R},\Pi)$ where\smallskip{}

\noindent $(\mathcal{S},\preceq)$ is a complete lattice with $\mathcal{S}=\{S,S',...\}$
a set of disjoint, non-empty \textbf{state-spaces} $S=\{s,s',...\}$
s.t. $S\preceq S'$ implies $|S|\leq|S'|$. Let $\Omega_{\mathsf{F}}:=\bigcup_{S\in\mathcal{S}}S$
be the disjoint union of state-spaces~in~$\mathcal{S}$. For $X\subseteq\Omega_{\mathsf{F}}$,
let $S(X)$ be the state-space containing $X$, if such exists (else
$S(X)$ is undefined). Let $S(s)$ be $S(\{s\})$.\smallskip{}

\noindent $\mathcal{R}=\{r_{S}^{S'}\colon S,S'\in\mathcal{S},S\preceq S'\}$
is a family of \textbf{projections} $r_{S}^{S'}:S'\rightarrow S$.
Each $r_{S}^{S'}$ is surjective, $r_{S}^{S}$ is $Id$, and $S\preceq S'\preceq S''$
implies commutativity: $r_{S}^{S''}=r_{S}^{S'}\circ r_{S'}^{S''}$.
Denote $r_{S}^{T}(w)$ also by $w_{S}$.

$D^{\uparrow}=\bigcup_{S'\succeq S}(r_{S}^{S'})^{-1}(D)$ is the \textbf{upwards
closure }of $D\subseteq S\in\mathcal{S}$.\footnote{\textit{\emph{To avoid confusion, note that for $d\in S$, $(r_{S}^{S'})^{-1}(d)=\{s'\in S':r_{S}^{S'}(s')=d\}$
and for $D\subseteq S$, $(r_{S}^{S'})^{-1}(D)=\bigcup_{d\in D}(r_{S}^{S'})^{-1}(d)$.}}}\smallskip{}

\noindent $\Pi$ assigns each $a\in Ag$ a \textbf{possibility correspondence
}$\Pi_{a}:\Omega_{\mathsf{F}}\rightarrow2^{\Omega_{\mathsf{F}}}$
satisfying\vspace{-6pt}
\begin{description}
\item [{\label{HMS1:confinement}Conf}] (\textbf{Confinement})\quad{}
If $w\in S'$, then $\Pi_{a}(w)\subseteq S$ for some $S\preceq S'$.
\item [{Gref\label{HMS2:G.Ref}}] (\textbf{Generalized~Reflexivity}) $w\in\left(\Pi_{a}(w)\right)^{\uparrow}$
for every $w\in\Omega_{\mathsf{F}}$.
\item [{\label{HMS3:stationarity}Stat}] (\textbf{Stationarity}) $w'\in\Pi_{a}(w)$
implies $\Pi_{a}(w')=\Pi_{a}(w)$.
\item [{\label{HMS4:PPI}PPI}] (\textbf{Projections~Preserve~Ignorance})
If $w\in S'$ and $S\preceq S'$, then\linebreak{}
 $(\Pi_{a}(w))^{\uparrow}\subseteq(\Pi_{a}(r_{S}^{S'}(w)))^{\uparrow}$.
\item [{\label{HMS5:PPK}PPK}] (\textbf{Projections~Preserve~Knowledge})
If $S\preceq S'\preceq S''$, $w\in S''$ and $\Pi_{a}(w)\subseteq S'$,
then $r_{S}^{S'}(\Pi_{a}(w))=\Pi_{a}(r_{S}^{S''}(w))$.\vspace{-6pt}
\end{description}
Jointly call these five properties of $\Pi_{a}$ the \textbf{\textbf{\textlabel{HMS properties}{text:HMSproperties}}}\label{text:HMSproperties}.
\end{definition}

\ref{HMS1:confinement} ensures that agents only consider possibilities
within one fixed ``vocabulary''; \ref{HMS2:G.Ref} induces factivity
of knowledge and \ref{HMS3:stationarity} yields introspection for
knowledge and awareness. \ref{HMS4:PPI} entails that at down-projected
states, agents neither ``miraculously'' know or become aware of
something new, while \ref{HMS5:PPK} implies that at down-projected
states, the agent can still ``recall'' all events she knew before,
if they are still expressible. Jointly \ref{HMS4:PPI} and \ref{HMS5:PPK}
imply that agents preserve awareness of all events at down-projected
states, if they are still expressible.
\begin{remark}
\label{rem:No-Objective-State}Unawareness frames include no objective
perspective, as agents do not\textemdash unless they are fully aware\textemdash have
a range of uncertainty defined for the maximal state-space. Taking
the maximal state-space to contain a designated `actual world' and
as providing a full and objective description of states, one can still
not evaluate agents ``true'' uncertainty/implicit knowledge. See
e.g. Figure~\ref{fig:HMSmodel} below: In $(\neg i,\ell)$, the dashed
agent's ``true'' uncertainty about $\ell$ is not determined.
\end{remark}

\subsection{Syntax-Free Unawareness}

Unawareness frames provide sufficient structure to define syntax-free
notions of knowledge and awareness. These are defined directly as
events on $\Omega_{\mathsf{F}}$.
\begin{definition}
Let $\mathsf{F}=(\mathcal{S},\preceq,\mathcal{R},\Pi)$ be an unawareness
frame. An \textbf{event} in $\mathsf{F}$ is any pair $(D^{\uparrow},S)$
with $D\subseteq S\in\mathcal{S}$ with $S$ also denoted $S(D^{\uparrow})$.
Let $\Sigma_{\mathsf{F}}$ be the set of events of $\mathsf{F}$.

The \textbf{negation} of the event $(D^{\uparrow},S)$ is $\neg(D^{\uparrow},S)=((S\backslash D)^{\uparrow},S)$.

The \textbf{conjunction} of events $\{(D_{i}^{\uparrow},S_{i})\}_{i\in I}$
is $((\bigcap_{i\in I}D_{i}^{\uparrow}),\sup_{i\in I}S_{i})$.

The events that $a$ \textbf{knows} event $(D^{\uparrow},S)$ and
where $a$ is \textbf{aware} of it are \vspace{-8pt}

{\small{}
\begin{align*}
\boldsymbol{K}_{a}((D^{\uparrow},S)) & =\begin{cases}
(\{w\in\Omega_{\mathsf{F}}\colon\Pi_{a}(w)\subseteq D^{\uparrow}\},S(D)) & \text{\,\,\,\,\,\,\,\,\,\,\,if }\exists w\in\Omega_{\mathsf{F}}.\Pi_{a}(w)\subseteq D^{\uparrow}\\
(\emptyset,S(D)) & \text{\,\,\,\,\,\,\,\,\,\,\,else}
\end{cases}\\
\boldsymbol{A}_{a}((D^{\uparrow},S)) & =\begin{cases}
(\{w\in\Omega_{\mathcal{\mathsf{F}}}\colon\Pi_{a}(w)\subseteq S(D^{\uparrow})^{\uparrow}\},S(D)) & \text{if }\exists w\in\Omega_{\mathsf{F}}.\Pi_{a}(w)\subseteq S(D^{\uparrow})^{\uparrow}\\
(\emptyset,S(D)) & \text{else}
\end{cases}
\end{align*}
}{\small\par}
\end{definition}

\noindent Negation, conjunction, knowledge and awareness events are
well-defined \cite{HMS2006,Schipper2014}. To illustrate the definitions,
some intuitions behind them: $i)$ an event modeled as a pair $(D^{\uparrow},S)$
captures that $a)$ if the event is expressible in $S$, then it is
also expressible in any $S'\succeq S$, hence $D^{\uparrow}$ is the
set of all states where the event is expressible and occurs, and $b)$
the event is expressible in the``vocabulary'' of $S$, but not the
``vocabulary'' of lower state-spaces: $D\subseteq S$ are the states
with the lowest ``vocabulary'' where the event is expressible and
occurs. \cite{Schipper2014} remarks that for $(D^{\uparrow},S)$,
if $D\ne\emptyset$, then $S$ is uniquely determined by $D^{\uparrow}$.
$ii)$ Events are given a non-binary understanding: an event $(D^{\uparrow},S)$
and it's negation does not partition $\Omega_{\mathsf{F}}$, as $s\in S'\prec S$
is in neither, but they do partition every $S''\succeq S$. $iii)$
Conjunction defined using supremum captures that the state-space required
to express the conjunction of two events is the least expressive state-space
that can express both events. $iv)$ Knowledge events are essentially
defined as in Aumann structures/state-space models: the agent knows
an event if its ``information cell'' is a subset of the event's
states. $v)$ Awareness events captures that ``\emph{an agent is
aware of an event if she considers possible states in which this event
is \textquotedblleft expressible\textquotedblright }.''\cite[p. 97]{Schipper2014}

\subsection{HMS Models}

Though unawareness frames provide a syntax-free framework adequate
for defining awareness, HMS \cite{HMS2008} use them as a semantics
for a formal language in order to identify their logic. The language
and logic are topics of Sections \ref{sec:Language} and \ref{sec:Logic}.

Instead, the models we will later define are not syntax-free. As Kripke
models, they include a valuation of atomic propositions. Therefore,
they do not correspond to unawareness frames directly, but to the
models that result by augmenting such frames with valuations. To compare
the two model classes, we define such valuations here, postponing
HMS syntax and semantics to Section~\ref{sec:Language}. Figure~\ref{fig:HMSmodel}
illustrates an HMS model, using an example inspired by \cite[p. 87]{HMS2006}
\begin{definition}
\label{def:HMS model} Let $\mathsf{F}=(\mathcal{S},\preceq,\mathcal{R},\Pi)$
be an unawareness frame with events $\Sigma_{\mathsf{F}}$. An \textbf{HMS
valuation }for $At$ and $\mathsf{F}$ is a map $V_{\mathsf{M}}:At\rightarrow\Sigma_{\mathsf{F}}$,
assigning to every atom from $At$ an event in $\mathsf{F}$. An \textbf{HMS
model} is an unawareness frame augmented with an HMS valuation, denoted
$\mathsf{M}=(\mathcal{S},\preceq,\mathcal{R},\Pi,V_{\mathsf{M}})$.\vspace{-28pt}
\end{definition}

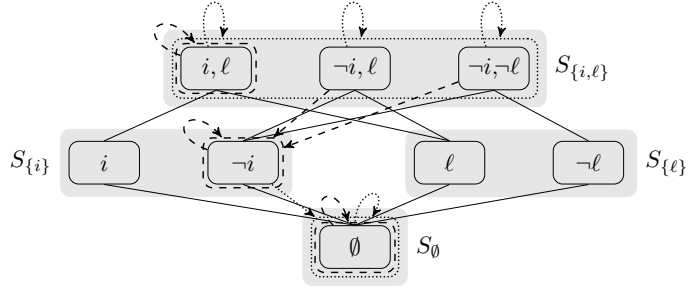
\begin{figure}[H]
\begin{centering}
\resizebox{0.8\textwidth}{!}{
	\pgfdeclarelayer{background}
	\pgfdeclarelayer{foreground}
	\pgfsetlayers{background,main,foreground}

	\begin{tikzpicture}
	\tikzset{world/.style={rectangle, draw=black, rounded corners, text width=23pt, minimum height=18pt, %14-18pt
			text centered},
		modal/.style={>=stealth',shorten >=1pt,shorten <=1pt,auto,node distance=1cm},
		state/.style={circle,draw,inner sep=0.5mm,fill=black}, 
		PCon/.style ={rectangle,draw=black, rounded corners, semithick, densely dotted, text width=30pt, minimum height=18pt, text centered},
		PCbn/.style ={draw=black, rounded corners, semithick, dashed, text width=34pt, minimum height=21pt}, 
		PCo/.style={->,densely dotted, semithick, >=stealth'},
		EPCo/.style={-,densely dotted, semithick},
		EPCb/.style={-,dashed, semithick},
		PCb/.style={->,dashed, semithick, >=stealth'}, %%% RASMUS: ARROW HEAD CHANGE SUGGESTION
		proj/.style={-, line width=0.001pt},
		%	block/.style ={rectangle, fill=black!2,draw=black!40, rounded corners, text width=5em, minimum height=0.3cm},
		% RASMUS suggests this to limit number of mainly-line rectangles:
		block/.style ={rectangle, fill=black!10,draw=black!10, rounded corners, text width=5em, minimum height=0.3cm},
		reflexive above/.style={->,out=60,in=100,looseness=8},
		%	reflexive below/.style={->,loop,looseness=7,in=240,out=300},
		reflexive left/.style={->,out=-170,in=-190,looseness=8},
		%	reflexive right/.style={->,loop,looseness=7,in=30,out=330}
	}
	
	%Top space
	\begin{pgfonlayer}{foreground}
	\node[world] (a) {$i,\ell$};

	\node[world, right= of a] (b) {$\neg i,\ell$};
	\node[world, right= of b] (c) {$  \neg i,\!\!\neg \ell$};

	%Left space
	\node[world, below =of a, xshift=4mm, yshift=2.5mm] (b') {$\neg i$};
	\node[world, left=of b'] (a') {$i$};

	%Right space
	\node[world, right=of b', xshift=10mm] (a'') {$\ell$};
	\node[world, right=of a''] (b'') {$\neg \ell$};

	%Lower space
	\node[world, below=of b, yshift=-10mm] (a''') {$\emptyset$};
	\end{pgfonlayer}
	
	%Possibility Correspondances
	\node [PCbn](g)[text width=27pt, minimum height=21pt] {}; 	
	
	\node [PCbn, above=-7.05mm of b'] (g') [text width=27pt, minimum height=21pt] {}; 	
	\node [PCon,  fit= (a)(b)(c)] (g''){}; 	
	\node [PCon, fit=(a''')] (g''') {};

	\draw [PCb] (a) to [out=-200,in=130,looseness=7] (g)  node {$ $};
	
	\node[above=-3pt of a, xshift=2pt] (anchor1) {};
	\draw[PCo] (a) to [out=110,in=80,looseness=8] (anchor1);
	
	\node[above=-3pt of b, xshift=2pt] (anchor1) {};
	\draw[PCo] (b) to [out=110,in=80,looseness=8] (anchor1);
	
	\node[above=-3pt of c, xshift=2pt] (anchor1) {};
	\draw[PCo] (c) to [out=110,in=80,looseness=8] (anchor1);
	
	\draw[PCb] (b') to [out=-200,in=130,looseness=6.5] (g');
	
	\node [PCbn, above=-7.1mm of a'''](h')[text width=27pt, minimum height=21pt] {};

	\draw[PCb] (a''') to [out=135,in=105,looseness=7] (h');
	\draw[PCo] (a''') to [out=90,in=60,looseness=5] (g''');
	
	\draw [PCb] (b) to (g')  node {$ $};
	\draw [PCb] (c) to (g')  node {$ $};
	\draw [PCo] (b') to (g''')  node {$ $};

	\node [ above=-7.1mm of b''](h'')[text width=27pt, minimum height=21pt] {}; 				
	%Spaces
	\begin{pgfonlayer}{background}
	\node[block, fit=(g'')] (block) [label= right:{$S_{\{i,\ell\}}$}]{ };
	\node[block, fit=(g')(a')] (block) [label=left: {$S_{\{i\}}$}]{ };
	
	\node[block, fit=(a'')(h'')] (block) [label= right:{$S_{\{\ell\}}$}]{ };
	
	\node[block, fit=(a''')(g''')] (block) [label= right:{$S_{\emptyset}$}]{ };
	\end{pgfonlayer}       
	
	%Projections
	
	\draw [-] (a.south) -- (a''.north)[line width=0.001pt];
	\draw [-] (b.south) -- (a''.north)[line width=0.001pt];
	\draw [-] (c.south) -- (b''.north)[line width=0.001pt];
	
	\draw [-] (a.south) -- (a'.north)[line width=0.001pt];
	\draw [-] (b.south) -- (b'.north)[line width=0.001pt];
	\draw [-] (c.south) -- (b'.north)[line width=0.001pt];
	
	\draw [-] (a'.south) -- (a'''.north)[line width=0.001pt];
	\draw [-] (b'.south) -- (a'''.north)[line width=0.001pt];
	
	\draw [-] (a''.south) -- (a'''.north)[line width=0.001pt];
	\draw [-] (b''.south) -- (a'''.north)[line width=0.001pt];

	\end{tikzpicture}
}
%% I would like reflexive poss. correspondance on left \neg i state

%% and reflexive poss. correspondanceS on S_Ø

\vspace{-8pt}
\par\end{centering}
\caption{\label{fig:HMSmodel}An HMS model with four state-spaces (gray rectangles),
ordered spatially as a lattice. States (smallest rectangles) are labeled
with their true literals, over the set $At=\{i,\ell\}$. Thin lines
between states show projections. There are two possibility correspondences
(dashed and dotted): arrow-to-rectangle shows a mapping from state
to set (information cell). Omitted arrows go to $S_{\emptyset}$ and
are irrelevant to the story.\textbf{$\protect\phantom{aaa}$ $\protect\phantom{aaa}$Story:}
Buyer (dashed) and Owner (dotted) consider trading a firm, the price
influenced by whether $i$ (a value-raising innovation) and $\ell$
(a value-lowering lawsuit) occurs. Assume both occur and take $(i,\text{\ensuremath{\ell}})$
as actual. Then Buyer has full information, while Owner has factual
uncertainty and uncertainty about Buyer's awareness and higher-order
information, ultimately considering it possible that Buyer holds Owner
fully unaware. \emph{In detail:} Buyer's $(i,\ell)$ information cell
has both $i$ and $\ell$ defined (and is also singleton), so Buyer
is aware of them (and also knows everything). Owner is also aware
of $i$ and $\ell$, but their $(i,\ell)$ information cell contains
also $\neg i$ and $\neg\ell$ states, so Owner knows neither. Owner
is also uncertain about Buyer's information: Owner knows that either
Buyer knows $i$ and $\ell$ (cf. Buyer's $(i,\ell)$ information
cell), or Buyer knows $\neg i$, but is unaware of $\ell$ (cf. the
dashed arrows from $\neg i$ states to the less expressive state space
$S_{\{i\}}$) and then only holds it possible that Owner is unaware
of both $i$ and $\ell$ (cf. the dotted map to $S_{\emptyset}$).
See also Remark \ref{rem:HMS-redundant-states} concerning~$S_{\{\ell\}}$.}
\end{figure}

\begin{remark}
\noindent \label{rem:HMS-valuations-only}HMS valuations only partially
respect the intuitive interpretation of state-spaces lattices, where
$S\preceq S'$ represents that $S'$ is at least as expressive as
$S$. If $S\preceq S'$, then $p\in At$ having defined truth value
at $S$ entails that it has defined truth value at $S'$, but if $S$
is strictly less expressive than $S'$, then this does not entail
that there is some atom $q$ with defined truth value in $S'$, but
undefined truth value in $S$. Hence, there can exist two spaces defined
for the same set of atoms, but where one is still ``strictly more
expressive'' \mbox{than the other.}
\end{remark}

\begin{remark}
\label{rem:HMS-redundant-states} Concerning Figure~\ref{fig:HMSmodel},
then the state-space $S_{\{\ell\}}$ is, in a sense, redundant: its
presence does not affect the knowledge or awareness of agents in the
state $(i,\ell)$, and it presence is not required by definition.
This stands in contrast with the corresponding Kripke lattice model
in Figure~\ref{fig:Kripke-lattice}, cf. Remark \ref{rem:KLM-redundant-states}.
\end{remark}

\section{\label{sec:Kripke-Lattice-Models}Kripke Lattice Models}

The models for awareness we construct starts from Kripke models:
\begin{definition}
\label{def:kripke}A \textbf{Kripke model} for $At'\subseteq At$
is a tuple $\mathtt{K}=(W,R,V)$ where $W$ is a non-empty set of
worlds, $R:Ag\rightarrow\mathcal{P}(W^{2})$ assigns to each agent
$a\in Ag$ an accessibility relation denoted $R_{a}$, and $V:At'\rightarrow\mathcal{P}(W)$
is a valuation.

The \textbf{information cell }of $a\in Ag$ at $w\in W$ is $I_{a}(w)=\{v\in W\colon wR_{a}v\}$.
\end{definition}

\noindent The term `information cell' hints at an epistemic interpretation.
For generality, $R$ may assign non-equivalence relations. Some results
explicitly assume otherwise.

As counterpart to the HMS state-space lattice, we build a lattice
of restricted models. The below definition of the set of worlds $W_{X}$
ensures that for any $X,Y\subseteq At$, $X\ne Y$, the sets $W_{X}$
and $W_{Y}$ are disjoint, mimicking the same requirement for state-spaces.
In the restriction $\mathtt{K}_{X}$ of \textsc{$\mathtt{K}$,} it
is required that $(w_{X},v_{X})\in R_{aX}$ iff $(w,v)\in R_{a}$.
Each direction bears similarity to an HMS property: left-to-right
to \ref{HMS5:PPK} and right-to-left to \ref{HMS4:PPI}. They also
remind us, resp., of the \emph{No Miracles} and \emph{Perfect Recall}
properties from Epistemic Temporal Logic, cf. e.g., \cite{Benthem_etal_2009,vanLee_etal_2019}.
\begin{definition}
\label{def:restricted model}Let $\mathtt{K}=(W,R,V)$ be a Kripke
model for $At$. The \textbf{restriction} of $\mathtt{K}$ to $X\subseteq At$
is the Kripke model $\mathtt{K}_{X}=(W_{X},R_{X},V_{X})$ for $X$
where

$W_{X}=\{w_{X}\colon w\in W\}$ where $w_{X}$ is the ordered pair
$(w,X)$,

$R_{Xa}=\{(w_{X},v_{X})\colon(w,v)\in R_{a}\}$ and

$V_{X}:X\rightarrow\mathcal{P}(W_{X})$ such that, for all $p\in X$,$w_{X}\in V_{X}(p)$
iff $w\in V(p)$.

\noindent For the $R_{Xa}$ information cell of $a$ at $w_{X}$,
write $I_{a}(w_{X})$.
\end{definition}

To construct a lattice of restricted models, we simply order them
in accordance with subset inclusion of the atoms. This produces a
complete lattice.
\begin{definition}
\label{def:restriction lattice}Let $\mathtt{K}$ be a Kripke model
for $At$. The \textbf{restriction lattice} of $\mathtt{K}$ is $(\mathcal{K}(\mathtt{K}),\trianglelefteqslant)$
where $\mathcal{K}(\mathtt{K})=\{\mathtt{K}_{X}\}_{X\subseteq At}$
is the set of restrictions of $\mathtt{K}$, and $\mathtt{K}_{X}\trianglelefteqslant\mathtt{K}_{Y}$
iff $X\subseteq Y$.
\end{definition}

Projections in unawareness frames are informally interpreted as mapping
states to alternates of themselves in less expressive spaces. Restriction
lattices offer the same, but implemented w.r.t. $At$: if $Y\subseteq X\subseteq At$,
then $w_{Y}$ is the alternate of $w_{X}$ formally described by the
smaller vocabulary of atoms, $Y$.

The accessibility relations of the Kripke models in a restriction
lattice accounts for the epistemic dimension of the HMS possibility
correspondence $\Pi_{a}$. For the awareness dimension, each agent
$a\in Ag$ is assigned an \emph{awareness map} $\pi{}_{a}$ that maps
a world $w_{X}$ down to $\pi_{a}(w_{X})=w_{Y}$ for some $Y\subseteq X$.
We think of $\pi_{a}(w_{X})$ as $a$'s \emph{awareness image} of
$w_{X}$\textemdash i.e., $w_{X}$ as it occurs to $a$ given her
(un)awareness; the submodel from $\pi_{a}(w_{X})$ is thus $a$'s
subjective perspective.

In the following definition, we introduce three properties of awareness
maps, which we will assume. Intuitions follow the definition.
\begin{definition}
\label{def:awareness.map}With $\mathsf{L}=(\mathcal{K}(\mathtt{K}),\trianglelefteqslant)$
a restriction lattice, let $\Omega_{\mathsf{L}}=\bigcup\mathcal{K}(\mathtt{K})$
and let $\pi$ assign to each agent $a\in Ag$ an \textbf{awareness
map} $\pi_{a}:\Omega_{\mathsf{L}}\rightarrow\Omega_{\mathsf{L}}$
satisfying
\begin{description}
\item [{\label{our1-DownwardsProjection}D}] (\textbf{Downwards}) For all
$w_{X}\in\Omega_{\mathsf{L}}$, $\pi{}_{a}(w_{X})=w_{Y}$ for some
$Y\subseteq X$.
\item [{\label{our2-Intro.Idem}I\,I}] (\textbf{Introspective Idempotence})
If $\pi_{a}(w_{X})=w_{Y}$, then for all $v_{Y}\in I_{a}(w_{Y})$,
$\pi_{a}(v_{Y})=u_{Y}$ for some $u_{Y}\in I_{a}(w_{Y})$.
\item [{\label{our3-NoSurpises}NS}] (\textbf{No Surprises}) If $\pi_{a}(w_{X})=w_{Z}$,
then for all $Y\subseteq X$, $\pi_{a}(w_{Y})=w_{Y\cap Z}$.\vspace{-3pt}
\end{description}
Call $\mathsf{K}=(\mathcal{K}(\mathtt{K}),\trianglelefteqslant,\pi)$
the \textbf{Kripke lattice model} of $\mathtt{K}$.
\end{definition}

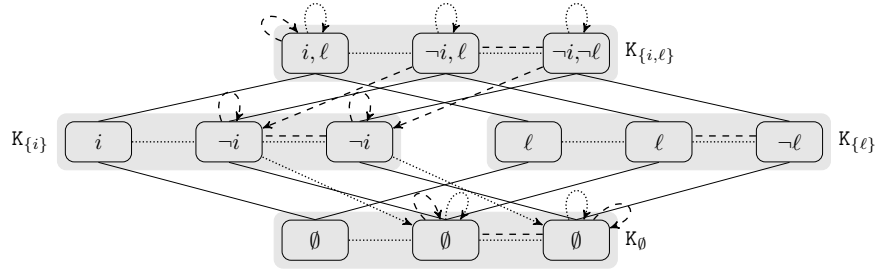
\begin{figure}
\vspace{-28pt}
\resizebox{1\textwidth}{!}{

\pgfdeclarelayer{background}
\pgfdeclarelayer{foreground}
\pgfsetlayers{background,main,foreground}

\begin{tikzpicture}
\tikzset{world/.style={rectangle, draw=black, rounded corners, text width=23pt, minimum height=18pt, %14-18pt
		text centered},
	modal/.style={>=stealth',shorten >=1pt,shorten <=1pt,auto,node distance=1cm},
	state/.style={circle,draw,inner sep=0.5mm,fill=black}, 
	PCon/.style ={rectangle,draw=black, rounded corners, semithick, densely dotted, text width=30pt, minimum height=18pt, text centered},
	PCbn/.style ={draw=black, rounded corners, semithick, dashed, text width=34pt, minimum height=21pt}, 
	PCo/.style={->,densely dotted, semithick, >=stealth'},
	EPCo/.style={-,densely dotted, semithick},
	EPCb/.style={-,dashed, semithick},
	PCb/.style={->,dashed, semithick, >=stealth'},
	proj/.style={-, line width=0.001pt, draw=black!2},
	block/.style ={rectangle, fill=black!10,draw=black!10, rounded corners, text width=5em, minimum height=0.3cm},
	reflexive above/.style={->,out=60,in=100,looseness=8},
	%	reflexive below/.style={->,loop,looseness=7,in=240,out=300},
	reflexive left/.style={->,out=-170,in=-190,looseness=8},
	%	reflexive right/.style={->,loop,looseness=7,in=30,out=330}
}

%Top space
\begin{pgfonlayer}{foreground}
\node[world] (a) {$i,\ell$};
\node[world, right= of a] (b) {$\neg i, \ell$};
\node[world, right= of b] (c) {$\neg i,\!\!\neg \ell$};

%Left space
\node[world, below =of a, xshift=7mm, yshift=2.5mm] (c') {$\neg i$};
\node[world, left=of c'] (b') {$\neg i$};
\node[world, left=of b'] (a') {$ i$};

%Right space
\node[world, right=of c', xshift=6mm] (a'') {$\ell$};
\node[world, right=of a''] (b'') {$ \ell$};
\node[world, right=of b''] (c'') {$\neg \ell$};

%Lower space
\node[world, below=of b, yshift=-13mm] (b''') {$\emptyset$};
\node[world, right=of b'''] (c''') {$\emptyset$};
\node[world, left=of b'''] (a''') {$\emptyset$};
\end{pgfonlayer}

%Outer model box 
\node[block, fit=(a)(b)(c)] (block) [label= right:{$\mathtt{K}_{\{i,\ell\}}$}]{ };
\node[block, fit=(a')(b')(c')] (block) [label=left: {$\mathtt{K}_{\{i\}}$}]{ };

\node[block, fit=(a'')(b'')(c'')] (block) [label= right:{$\mathtt{K}_{\{\ell\}}$}]{ };

\node[block, fit=(a''')(b''')(c''')] (block) [label= right:{$\mathtt{K}_{\emptyset}$}]{ };

%Awareness maps
\draw[PCo] (a) to [out=110,in=70,looseness=7] (a);
\draw[PCo] (b) to [out=110,in=70,looseness=7] (b);
\draw[PCo] (c) to [out=110,in=70,looseness=7] (c);	
\draw[EPCb] (c) to [out=170,in=10,looseness=0] (b);	
\draw[EPCb] (c') to [out=170,in=10,looseness=0] (b');	
\draw[EPCb] (c'') to [out=170,in=10,looseness=0] (b'');	
\draw[EPCb] (c''') to [out=170,in=10,looseness=0] (b''');	
\draw[PCb] (a) to [out=-200,in=130,looseness=6] (a)  node {$ $};
\draw[PCb] (b) to  (b');	
\draw[PCb] (b') to [out=110,in=70,looseness=7] (b');
\draw[PCb] (c) to (c');	
\draw[PCb] (c') to [out=110,in=70,looseness=7] (c');	
\draw[PCo] (b') to  (b''');	
\draw[PCo] (c') to (c''');	
\draw[PCb] (b''') to [out=135,in=105,looseness=7] (b''');
\draw[PCo] (b''') to [out=90,in=55,looseness=7] (b''');
\draw[PCo] (c''') to [out=110,in=70,looseness=7] (c''');
\draw[PCb] (c''') to [out=50,in=20,looseness=6] (c''');

%Epistemic relations
\draw[EPCo] (a) to (b);
\draw[EPCo] (b) to (c);
\draw[EPCo] (a') to (b');
\draw[EPCo] (b') to (c');
\draw[EPCo] (a'') to (b'');
\draw[EPCo] (b'') to (c'');
\draw[EPCo] (a''') to (b''');
\draw[EPCo] (b''') to (c''');

%Projections
\draw [-] (a.south) -- (a''.north)[line width=0.001pt];
\draw [-] (b.south) -- (b''.north)[line width=0.001pt];
\draw [-] (c.south) -- (c''.north)[line width=0.001pt];

\draw [-] (a.south) -- (a'.north)[line width=0.001pt];
\draw [-] (b.south) -- (b'.north)[line width=0.001pt];
\draw [-] (c.south) -- (c'.north)[line width=0.001pt];

\draw [-] (a'.south) -- (a'''.north)[line width=0.001pt];
\draw [-] (b'.south) -- (b'''.north)[line width=0.001pt];
\draw [-] (c'.south) -- (c'''.north)[line width=0.001pt];

\draw [-] (a''.south) -- (a'''.north)[line width=0.001pt];
\draw [-] (b''.south) -- (b'''.north)[line width=0.001pt];
\draw [-] (c''.south) -- (c'''.north)[line width=0.001pt];
\end{tikzpicture}
}
%\vspace{-8pt}

\caption{\label{fig:Kripke-lattice}A Kripke lattice model of the Figure \ref{fig:HMSmodel}
example. See four restrictions (gray rectangles), ordered spatially
as a lattice. States (smallest rectangles) are labeled with their
true literals, over the set $At=\{i,\ell\}$. Horizontal dashed and
dotted lines \emph{inside restrictions} represent Buyer and Owner's
accessibility relations (omitted are links obtainable by reflexive-transitive
closure), while dotted and dashed arrows \emph{between restrictions}
represent their awareness maps (some arrows are omitted: they go to
states' alternates in $\mathtt{K}_{\emptyset}$, and are irrelevant
from $(i,l)$). Thin lines connect states with their alternate in
lower restrictions. See also Remark \ref{rem:KLM-redundant-states}
concerning~$\mathtt{K}_{\{\ell\}}$. }
\end{figure}

\ref{our1-DownwardsProjection} ensures that an agent's awareness
image of a world is a restricted representation of that same world.
Hence the awareness image does not conflate worlds, and does not allow
the agent to be aware of a more expressive vocabulary than that which
describes the world she views from. With \ref{our2-Intro.Idem} and
accessibility assumed reflexive, it entails that $\pi_{a}$ is idempotent:
for all $w_{X},$ $\pi_{a}(\pi_{a}(w_{X}))=\pi_{a}(w_{X})$. Alone,
\ref{our2-Intro.Idem} states that in her awareness image, the agent
knows, and is aware of, the atoms that she is aware of. Given that
accessibility is distributed by inheritance through the Kripke models
in restriction lattices, the property implies that the same holds
for every such model. \ref{our3-NoSurpises} guarantees that awareness
remains ``consistent'' down the lattice, so that awareness of an
atom does not appear or disappear without reason. Consider the consequent
$\pi_{a}(w_{Y})=w_{Y\cap Z}$ and its two subcases $\pi_{a}(w_{Y})=w_{Y^{*}}$
with $Y^{*}\subseteq Y\cap Z$ and $Y^{*}\supseteq Y\cap Z$. Colloquially,
the first states that if atoms are removed from the description of
the world from which the agent views, then they are also removed from
her awareness. Oppositely, the second states that if atoms are removed
from the description of the world from which the agent views, then
no more than these should be removed from her awareness. Jointly,
no awareness should ``miraculously'' appear, and all \mbox{awareness
should be ``recalled''.}\footnote{Again, we are reminded of \emph{No Miracles }and \emph{Perfect Recall}.}
\begin{remark}
Contrary to HMS models (cf. Remark \ref{rem:No-Objective-State}),
Kripke lattice models have an objective perspective: designating an
`actual world' in $\mathtt{K}_{At}$ allows one to check agents'
uncertainty about the possible states of the world described by the
maximal language, i.e., from $\mathtt{K}_{At}$ we can read off their
``actual implicit knowledge''. See e.g. Figure~\ref{fig:Kripke-lattice}:
In the $(\neg i,\ell)$ state, the dashed agent's ``true'' uncertainty
about $\ell$ \emph{is }determined, contrary to the same state in
the HMS model of Figure~\ref{fig:HMSmodel}.
\end{remark}

\begin{remark}
\label{rem:KLM-redundant-states} In Remark \ref{rem:HMS-redundant-states},
we mentioned that the HMS state-space $S_{\{\ell\}}$ of Figure~\ref{fig:HMSmodel}
is redundant. Similarly, $\mathtt{K}_{\{\ell\}}$ is redundant in
Figure~\ref{fig:Kripke-lattice} (from $(i,\ell)$, $\mathtt{K}_{\{\ell\}}$
is unreachable.) However, contrary to the HMS case, it is here required
by definition, as a restriction lattice contains all restrictions
of the original Kripke model. For simplicity of constructions, we
have not here attempted to prune away redundant restrictions. A more
general model class may be obtained by letting models be based on
sub-orders of the restriction lattice. See also the concluding~remarks.
\end{remark}

\section{\label{sec:Moving}Moving between HMS Models and Kripke Lattices}

To clarify the relationship between HMS models and Kripke lattice
models, we introduce transformations between the two model classes,
showing that a model from one class encodes the structure of a model
from the other. The core idea is to think of a possibility correspondence
$\Pi_{a}$ as the composition of $I_{a}$ and $\pi_{a}$: $\Pi_{a}(w)$
is the information cell of the awareness image of $w$.

The propositions of this section show that the transformations produce
models of the desired class. Additionally, their proofs shed partial
light on the relationship between the HMS properties and those assumed
for awareness maps $\pi_{a}$ and accessibility relations $R_{a}$:
we discuss this shortly in the concluding remarks.

\subsection{From HMS Models to Kripke Lattice Models}

Moving from HMS models to Kripke lattice models requires a somewhat
involved construction as it must tease apart unawareness and uncertainty
from the possibility correspondences, and track the distribution of
atoms and their relationship to awareness. For an example, then the
Kripke lattice model in Figure \ref{fig:Kripke-lattice} is the HMS
model of Figure~\ref{fig:HMSmodel} transformed.
\begin{definition}
\label{def:L-transform}Let $\mathsf{M}=(\mathcal{S},\preceq,\mathcal{R},\Pi,V_{\mathsf{M}})$
be an HMS model with maximal state-space $T$. For any $O\subseteq\Omega_{\mathsf{M}}$,
let $At(O)=\{p\in At\colon O\subseteq V_{M}(p)\cup\neg V_{M}(p)\}$.\footnote{$At(O)$ contains the atoms that have a defined truth value in every
$s\in O$.}\smallskip{}

\noindent The \textbf{$L$-transform model of $\mathsf{M}$} is $L(\mathsf{M})=(\mathcal{K}(\mathtt{K}),\trianglelefteqslant,\pi)$
where the Kripke model $\mathtt{K}=(W,R,V)$ for $At$ given by\smallskip{}

\noindent \quad{}$W=T$;

\noindent \quad{}$R$ maps each $a\in Ag$ to $R_{a}\subseteq W^{2}$
s.t. $(w,v)\in R_{a}$ iff $r_{S(\Pi_{a}(w))}^{T}(v)\in\Pi_{a}(w)$;

\noindent \quad{}$V:At\rightarrow\mathcal{P}(W)$, defined by $V(p)\ni w$
iff $w\in V_{\mathsf{M}}(p)$, for every $p\in At$;\smallskip{}

\noindent $\pi$ assigns each $a\in Ag$ a map $\pi_{a}:\Omega_{L(\mathsf{M})}\rightarrow\Omega_{L(\mathsf{M})}$
s.t. for all $w_{X}\in\Omega_{L(\mathsf{M})}$,\smallskip{}

\noindent \quad{}$\pi_{a}(w_{X})=w_{Y}$ where $Y=At(S_{Y})$ for
the $S_{Y}\in\mathcal{S}$ with $S_{Y}\supseteq\Pi_{a}(r_{S_{X}}^{T}(w))$

\noindent \quad{}where $S_{X}=\min\{S\in\mathcal{S}\colon At(S)=X\}$.\smallskip{}

\noindent The \textbf{state correspondence} between $\mathsf{M}$
and $L(\mathsf{M})$ is the map $\ell:\Omega_{\mathsf{M}}\rightarrow2^{\Omega_{L(\mathsf{M})}}$
s.t. for all $s\in\Omega_{\mathsf{M}}$\smallskip{}

\noindent \quad{}$\ell(s)=\{w_{X}\in W_{X}\colon w\in(r_{S(s)}^{T})^{-1}(s)\text{ for }X=At(S(s))\}.$
\end{definition}

Intuitively, in the $L$-transform model, a world $v\in W$ is accessible
from a world $w\in W$ for an agent if, and only if, $v$'s restriction
to the agent's vocabulary at $w$ is one of the possibilities she
entertains.\footnote{We thank a reviewer for this wording.} In addition,
the awareness map $\pi_{a}$ of agent $a$ relates a world $w_{X}$
to its less expressive counterpart $w_{Y}$ if, and only if, $Y$
is the vocabulary agent $a$ adopts when describing what she considers
possible.
\begin{remark}
The $L$-transform model $L(\mathsf{M})$ of $\mathsf{M}$ is well-defined
as the object $\mathtt{K}=(W,R,V)$ is in fact a Kripke model for
$At$: $i)$ By def. of HMS models, $W=T\in\mathcal{S}$ is non-empty;
$ii)$ for each $a$, $R_{a}\subseteq W^{2}$ is well-defined: if
$w\in T=W$, then by \ref{HMS1:confinement}, $\Pi_{a}(w)\subseteq S$,
for some $S\in\mathcal{S}$. Hence, $U=\{v\in T\colon r_{S}^{T}(v)\in\Pi_{a}(w)\}$
is well-defined, and so is $\{(w,v)\in T^{2}\colon v\in U\}=R_{a}$;
$iii)$ As $V_{\mathsf{M}}$ is an HMS valuation $V_{\mathsf{M}}:At\rightarrow\Sigma$
for $At$, clearly $V$ is valuation for $At$. Hence $\mathtt{K}=(W,R,V)$
is a Kripke model for $At$.
\end{remark}

\begin{remark}
The $\min$ used in defining $S_{X}$ is due to the issue of Remark
\ref{rem:HMS-valuations-only}.
\end{remark}

\begin{remark}
\label{rem:state-correspondence}The state correspondence map $\ell$
is also well-defined. That it maps each state in $\Omega_{\mathsf{M}}$
to a \emph{set }of worlds in $\Omega_{L\mathsf{(M)}}$ points to a
construction difference between HMS models and Kripke lattice models:
in the former, the downwards projections of two states may `merge'
them, so state-spaces may shrink when moving down the lattice; in
the latter, distinct worlds remain distinct, so all world sets in
a restriction lattice share cardinality.
\end{remark}

As unawareness and uncertainty are separated in Kripke lattice models,
we show two results about $L$-transforms. The first shows that the
\ref{HMS1:confinement}, \ref{HMS3:stationarity} and \ref{HMS5:PPK}
entail that $\pi_{a}$ assigns awareness maps, and the second that
the five HMS properties entail that $R$ assigns equivalence relations.
In showing the first, we make use of the following lemma, which intuitively
shows that the information cell of an agent contains a state described
with a certain vocabulary if, and only if, the agent considers possible
the corresponding state described with the same vocabulary:
\begin{lemma}
\label{LemmaYYY} For every $w_{Y}\in\Omega_{\mathsf{K}}$, if $\Pi_{a}(w)\subseteq S$
and $At(S)=Y$, then $v_{Y}\in I_{a}(w_{Y})$ iff $v_{S}\in\Pi_{a}(w)$.
\end{lemma}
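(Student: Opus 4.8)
The plan is to prove the biconditional by unfolding the relevant definitions on each side until both reduce to the same membership statement about $\Pi_a(w)$, so that the two directions are established simultaneously. I would first treat the left-hand side. By the definition of the information cell in the restriction $\mathtt{K}_Y$, we have $v_Y \in I_a(w_Y)$ iff $(w_Y, v_Y) \in R_{Ya}$, and by the definition of the restricted accessibility relation $R_{Ya}$ (Definition~\ref{def:restricted model}) this holds iff $(w,v) \in R_a$ in the underlying Kripke model $\mathtt{K}$. Thus the $Y$-layer contributes nothing beyond re-indexing, and the task reduces to showing $(w,v) \in R_a$ iff $v_S \in \Pi_a(w)$.

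Next I would unfold the right-hand side through the $L$-transform's definition of $R_a$ (Definition~\ref{def:L-transform}): $(w,v)\in R_a$ iff $r_{S(\Pi_a(w))}^{T}(v) \in \Pi_a(w)$. The crux is then to identify $S(\Pi_a(w))$ with the hypothesized $S$. Since $\Pi_a(w) \subseteq S$ by assumption and, by \ref{HMS2:G.Ref}, $w \in (\Pi_a(w))^{\uparrow}$ forces $\Pi_a(w) \neq \emptyset$, and since the state-spaces in $\mathcal{S}$ are pairwise disjoint, $S$ is the unique state-space containing $\Pi_a(w)$; hence $S(\Pi_a(w)) = S$. (Here \ref{HMS1:confinement} guarantees that $\Pi_a(w)$ lies in a single state-space, so the identification is unambiguous.) Substituting, $(w,v) \in R_a$ iff $r_S^{T}(v) \in \Pi_a(w)$, which under the abbreviation $v_S = r_S^{T}(v)$ is exactly $v_S \in \Pi_a(w)$. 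Chaining the two equivalences yields the lemma.

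The one point requiring care — and the only step beyond routine definition-chasing — is the identification $S(\Pi_a(w)) = S$, since $S(\cdot)$ is defined only on non-empty sets contained in a single state-space; this is precisely where \ref{HMS2:G.Ref} (for non-emptiness) and \ref{HMS1:confinement} together with disjointness of state-spaces are invoked. I would also note that the hypothesis $At(S) = Y$ is not needed to drive the derivation itself: it serves only to fix $\mathtt{K}_Y$ as the restriction whose vocabulary matches the state-space $S$ in which $a$'s possibility set resides. This is what makes the correspondence between $I_a(w_Y)$ and $\Pi_a(w)$ a genuine \emph{same-vocabulary} statement, and what makes the lemma usable in the subsequent argument that $\pi_a$ is a well-defined awareness map.
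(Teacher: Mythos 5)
Your proof is correct and follows essentially the same chain of definitional unfoldings as the paper's own proof ($I_a$ in the restriction, then $R_{Ya}$ back to $R_a$, then the $L$-transform's definition of $R_a$). The only difference is that you make explicit the identification $S(\Pi_a(w))=S$ via \ref{HMS2:G.Ref} and disjointness of state-spaces, a step the paper leaves implicit; this is a welcome clarification rather than a divergence.
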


\begin{proof}
Let $w_{Y}\in\Omega_{L\mathsf{(M)}}$. Consider the respective $w\in T=W$
and let $\Pi_{a}(w)\subseteq S$, with $At(S)=Y$. Assume that $v_{Y}\in I_{a}(w_{Y})$.
This is the case iff (def. of $I_{a}$) $(w_{Y},v_{Y})\in R_{Ya}$
iff (def. of restriction lattice) $(w,v)\in R_{a}$ iff (Def. \ref{def:L-transform})
$v_{S}\in\Pi_{a}(w)$.
\end{proof}

\begin{proposition}
\label{prop:L-transform-is-KLM}For any HMS model $\mathsf{M}$, its
$L$-transform $L(\mathsf{M})$ is a Kripke lattice model.
\end{proposition}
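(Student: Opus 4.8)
To prove that $L(\mathsf{M})$ is a Kripke lattice model, I must verify the requirements of Definition~\ref{def:awareness.map}: that $(\mathcal{K}(\mathtt{K}),\trianglelefteqslant)$ is a genuine restriction lattice (which follows once $\mathtt{K}=(W,R,V)$ is a bona-fide Kripke model for $At$, already established in the remark following Definition~\ref{def:L-transform} and the construction of the restriction lattice), and, crucially, that the map $\pi$ defined in the $L$-transform does assign to each agent a legitimate \emph{awareness map}, i.e.\ one satisfying \ref{our1-DownwardsProjection} (Downwards), \ref{our2-Intro.Idem} (Introspective Idempotence), and \ref{our3-NoSurpises} (No Surprises). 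The lattice structure being routine, the entire content of the proof is checking these three properties of $\pi_a$, and as the excerpt foreshadows, this is where \ref{HMS1:confinement}, \ref{HMS3:stationarity}, and \ref{HMS5:PPK} will be invoked.

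\textbf{Downwards (\ref{our1-DownwardsProjection}).} This is the easiest of the three and I would dispatch it first. Given $w_X$, the definition sets $\pi_a(w_X)=w_Y$ with $Y=At(S_Y)$ for the space $S_Y\supseteq\Pi_a(r_{S_X}^{T}(w))$. I need $Y\subseteq X$. By \ref{HMS1:confinement}, $\Pi_a(r_{S_X}^{T}(w))$ lives in some space $S_Y\preceq S_X$; since $\preceq$-smaller spaces carry weakly fewer defined atoms (monotonicity of $At(\cdot)$ along $\preceq$, which I would state and justify from the projection structure), $At(S_Y)\subseteq At(S_X)=X$, giving $Y\subseteq X$ as required.

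\textbf{Introspective Idempotence (\ref{our2-Intro.Idem}).} This is where I expect the real work, and I anticipate it to be the main obstacle. Suppose $\pi_a(w_X)=w_Y$. I must show that for every $v_Y\in I_a(w_Y)$, $\pi_a(v_Y)=u_Y$ for some $u_Y\in I_a(w_Y)$ — i.e.\ that $\pi_a$ maps the information cell of $w_Y$ back into itself and, in particular, fixes the vocabulary at $Y$. The plan is to translate the hypothesis $v_Y\in I_a(w_Y)$ into HMS terms via Lemma~\ref{LemmaYYY}, obtaining $v_S\in\Pi_a(w)$ for the space $S$ with $At(S)=Y$ and $\Pi_a(w)\subseteq S$. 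Then \ref{HMS3:stationarity} gives $\Pi_a(v_S)=\Pi_a(w)$, so $v_S$ and $w$ induce the \emph{same} cell; unwinding the definition of $\pi_a$ at $v_Y$ should then yield a target vocabulary identical to $Y$ (because the defining space $S_{Y'}\supseteq\Pi_a(v_S)=\Pi_a(w)$ can be taken as the same $S$, whence $At$ of it is again $Y$), and Lemma~\ref{LemmaYYY} run in reverse places the image $u_Y$ inside $I_a(w_Y)$. The delicate point is bookkeeping the interplay between the $\min$-selected space $S_{X}$ at level $v_Y$ and the original $S$; I would handle this by showing $\Pi_a(v_S)\subseteq S$ forces the relevant $\min$ and $\sup$ choices to coincide, keeping the vocabulary pinned at $Y$.

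\textbf{No Surprises (\ref{our3-NoSurpises}).} Suppose $\pi_a(w_X)=w_Z$; I must show $\pi_a(w_Y)=w_{Y\cap Z}$ for every $Y\subseteq X$. The strategy is to compute both sides from the definition of $\pi_a$ and compare. Writing $Z=At(S_Z)$ with $S_Z\supseteq\Pi_a(r_{S_X}^{T}(w))$, I would apply \ref{HMS5:PPK} (Projections Preserve Knowledge) to relate $\Pi_a(r_{S_Y}^{T}(w))$ to the projection $r^{T}_{?}(\Pi_a(r_{S_X}^{T}(w)))$ down to the space at level $Y$; \ref{HMS5:PPK} is exactly the property ensuring the down-projected cell is the projection of the original cell, so the vocabulary defined there is the intersection of $Z$ with the atoms surviving at $Y$, namely $Y\cap Z$. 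Combined with the $\min$/$\sup$ conventions and the monotonicity of $At(\cdot)$ established for \ref{our1-DownwardsProjection}, this should yield $\pi_a(w_Y)=w_{Y\cap Z}$. The analogous care with the $\min$-selected defining spaces at the two levels (as in the previous case) will again be the fiddly part, but no new HMS property beyond \ref{HMS5:PPK} should be needed.

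Once all three properties are verified, $\pi$ assigns awareness maps and $L(\mathsf{M})=(\mathcal{K}(\mathtt{K}),\trianglelefteqslant,\pi)$ is by Definition~\ref{def:awareness.map} a Kripke lattice model, completing the proof.
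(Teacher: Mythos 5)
Your proposal follows essentially the same route as the paper's proof: it reduces the claim to verifying \ref{our1-DownwardsProjection}, \ref{our2-Intro.Idem} and \ref{our3-NoSurpises} for $\pi_a$, and invokes exactly the same ingredients for each, namely \ref{HMS1:confinement} for Downwards, Lemma~\ref{LemmaYYY} together with \ref{HMS3:stationarity} for Introspective Idempotence, and \ref{HMS5:PPK} for No Surprises. The only cosmetic differences are that the paper argues Introspective Idempotence by contradiction where you argue directly, and that the paper makes explicit a case split ($Z\subseteq Y$ versus $Y\subseteq Z$) in the No Surprises step that your sketch leaves implicit in the ``fiddly'' bookkeeping.
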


\begin{proof}
Let $\mathsf{M}=(\mathcal{S},\preceq,\mathcal{R},\Pi,V_{\mathsf{M}})$
be an HMS model with maximal state-space $T$. We show that $L(\mathsf{M})=(\mathcal{K}(\mathtt{K}),\trianglelefteqslant,\pi)$
is a Kripke lattice model by showing that $\pi_{a}$ satisfies the
three properties of an awareness map:

\emph{\ref{our1-DownwardsProjection}}: Consider an arbitrary $w_{X}\in\Omega_{L(\mathsf{M})}$.
By def. of $L$-transform, $X=At(S)$ for some $S\in\mathcal{S}$.
Let $S_{X}=\min\{S\in\mathcal{S}\colon At(S)=X\}$. If $w_{X}\in W_{X}$
then for some $w\in W=T$, $w_{S_{X}}\in S_{X}$. By \ref{HMS1:confinement},
$\Pi_{a}(w_{S_{X}})\subseteq S_{Y}$, for some $S_{Y}\preceq S_{X}$.
Let $Y=At(S_{Y})$. Then, by def. of $\pi_{a}$, $\pi_{a}(w_{X})=w_{Y}$
and $Y\subseteq X$.

\emph{\ref{our2-Intro.Idem}}: Let $\pi_{a}(w_{X})=w_{Y}$. By def.
of $\pi_{a}$, it holds that $\Pi_{a}(r_{S_{X}}^{T}(w))\subseteq S_{Y}$
with $At(S_{Y})=Y$ and $S_{X}=\min\{S\in\mathcal{S}\colon At(S)=X\}$.
For a contradiction, suppose there exists a $v_{Y}\in I_{a}(w_{Y})$
s.t. for all $u_{Y}\in I_{a}(w_{Y})$, $\pi_{a}(v_{Y})\not=u_{Y}$.
Then $\pi_{a}(v_{Y})=t_{Z}$ for some $Z\subseteq Y$ and $t_{Z}\not\in I_{a}(w_{Y})$.
By def. of $\pi_{a}$, $\pi_{a}(v_{Y})=t_{Z}$ iff $\Pi_{a}(r_{S_{Y}}^{T}(v))\subseteq S_{Z}$,
where $Z=At(S_{Z})$. Then, by Lemma \ref{LemmaYYY}, $t_{Z}\in I_{a}(v_{Z})$
iff $t_{S_{Z}}\in\Pi_{a}(r_{S_{Y}}^{T}(v))$. Moreover, as $\Pi_{a}(r_{S_{X}}^{T}(w))\subseteq S_{Y}$
and $At(S_{X})=X$, by Lemma \ref{LemmaYYY}, it also follows that
$v_{Y}\in I_{a}(w_{Y})$ iff $v_{S_{Y}}\in\Pi_{a}(r_{S_{X}}^{T}(w))$.
Since $v_{Y}\in I_{a}(w_{Y})$ then $v_{S_{Y}}\in\Pi_{a}(r_{S_{X}}^{T}(w))$.
Hence, by \ref{HMS3:stationarity}, $\Pi_{a}(r_{S_{X}}^{T}(w))=\Pi_{a}(r_{S_{Y}}^{T}(v))$,
which implies $t_{S_{Z}}\in\Pi_{a}(r_{S_{X}}^{T}(w))$. But then $t_{Z}\in I_{a}(v_{Z})$,
contradicting the assumption that $t_{Z}\not\in I_{a}(w_{Y})$. Thus,
for all $v_{Y}\in I_{a}(w_{Y})$, $\pi_{a}(v_{Y})=u_{Y}$ for some
$u_{Y}\in I_{a}(w_{Y})$.

\emph{\ref{our3-NoSurpises}}: Let $\pi_{a}(w_{X})=w_{Y}$. By \ref{our1-DownwardsProjection}
(cf. item 1. above), $Y\subseteq X$. Consider an arbitrary $Z\subseteq X$.
We have two cases: either $i)$ $Z\subseteq Y$ or $ii)$ $Y\subseteq Z$.
$i)$: then $Z\subseteq Y\subseteq X$. Let $Z=At(S_{Z})$, $Y=At(S_{Y})$,
and $X=At(S_{X})$. Then $S_{Z}\preceq S_{Y}\preceq S_{X}$. By \ref{HMS5:PPK},
$\big(\Pi_{a}(r_{S_{X}}^{T}(w))\big)_{Z}=\Pi_{a}(r_{S_{Z}}^{T}(w))$.
As $\pi_{a}(w_{X})=w_{Y}$, by def. of $\pi_{a}$, $\Pi_{a}(r_{S_{X}}^{T}(w))\subseteq S_{Y}$.
Then $\big(\Pi_{a}(r_{S_{X}}^{T}(w))\big)_{Z}=r_{S_{Z}}^{S_{Y}}\big(\Pi_{a}(r_{S_{X}}^{T}(w))\big)\subseteq S_{Z}$.
Hence $\Pi_{a}(r_{S_{Z}}^{T}(w))\subseteq S_{Z}$, and by def. of
$\pi_{a}$, $\pi_{a}(w_{Z})=w_{Z}$. As $Z\subseteq Y$, $\pi_{a}(w_{Z})=w_{Z}=w_{Z\cap Y}$.
$ii)$: then $Y\subseteq Z\subseteq X$. By analogous reasoning, we
have $\pi_{a}(w_{Y})=w_{Y}=w_{Y\cap Z}$ as $Y\subseteq Z$. We can
conclude that if $\pi_{a}(w_{X})=w_{Y}$, then for all $Z\subseteq X$,
$\pi_{a}(w_{Z})=w_{Z\cap X}$.
\end{proof}

\begin{proposition}
\label{prop:L-transform-has-EQ-R}If $L(\mathsf{M})=(\mathcal{K}(\mathtt{K}=(W,R,V)),\trianglelefteqslant,\pi)$
is the $L$-transform of an HMS model $\mathsf{M}$, then for every
$a\in Ag$, $R_{a}$ is an equivalence relation.
\end{proposition}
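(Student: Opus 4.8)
The plan is to show that $R_a$ satisfies reflexivity, symmetry, and transitivity, deriving each from the HMS properties via the defining relation $(w,v)\in R_a \iff r^T_{S(\Pi_a(w))}(v)\in \Pi_a(w)$. Throughout, let $S = S(\Pi_a(w))$, so $\Pi_a(w)\subseteq S$ by \ref{HMS1:confinement}, and write $v_S$ for $r^T_S(v)$; the membership condition is then simply $v_S\in\Pi_a(w)$.

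For \emph{reflexivity}, I would take an arbitrary $w\in W=T$ and show $w_S\in\Pi_a(w)$. This should follow from \ref{HMS2:G.Ref}: we have $w\in(\Pi_a(w))^\uparrow = \bigcup_{S'\succeq S}(r^{S'}_S)^{-1}(\Pi_a(w))$, so $w$ lies in the preimage of $\Pi_a(w)$ under the projection down to $S$, which unpacks to exactly $r^T_S(w)=w_S\in\Pi_a(w)$. Hence $(w,w)\in R_a$.

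For \emph{symmetry} and \emph{transitivity}, the engine is \ref{HMS3:stationarity}, and the key preliminary observation is that whenever $(w,v)\in R_a$ one has $\Pi_a(w)=\Pi_a(v)$. To extract this, suppose $v_S\in\Pi_a(w)$; since $v_S$ is a state lying in $\Pi_a(w)$, Stationarity gives $\Pi_a(v_S)=\Pi_a(w)$, and I would combine this with the fact that $\Pi_a$ is constant on the fibre $(r^T_S)^{-1}(v_S)$ — which I expect to derive from \ref{HMS4:PPI} (projections preserve ignorance) applied to $v$ and its projection $v_S$, together with \ref{HMS2:G.Ref}, forcing $(\Pi_a(v))^\uparrow=(\Pi_a(v_S))^\uparrow$ and hence $\Pi_a(v)=\Pi_a(v_S)=\Pi_a(w)$. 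Granting this equality, symmetry is immediate: from $(w,v)\in R_a$ we get $\Pi_a(v)=\Pi_a(w)$, so $S(\Pi_a(v))=S$, and reflexivity applied at $v$ gives $v_S\in\Pi_a(v)=\Pi_a(w)$; but I also need $w_S\in\Pi_a(v)$, which follows since $\Pi_a(v)=\Pi_a(w)$ and $w_S\in\Pi_a(w)$ by reflexivity. Transitivity is then equally clean: if $(w,v),(v,u)\in R_a$, the equality chain $\Pi_a(w)=\Pi_a(v)=\Pi_a(u)$ aligns all the relevant state-spaces at $S$, and $u_S\in\Pi_a(v)=\Pi_a(w)$ yields $(w,u)\in R_a$.

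The main obstacle I anticipate is the preliminary claim that $\Pi_a$ is constant on projection fibres, i.e. that $(w,v)\in R_a$ forces $\Pi_a(v)=\Pi_a(w)$ rather than merely relating them through upward closures. The subtlety is that $R_a$ is defined by comparing the \emph{projection} $v_S$ to $\Pi_a(w)$, while Stationarity speaks about states actually \emph{inside} $\Pi_a(w)\subseteq S$; bridging from $v\in T$ to its projection $v_S\in S$ is exactly where \ref{HMS4:PPI} and the injectivity properties of upward closure must be invoked carefully, and I would isolate this as a small lemma before assembling the three relational properties. Once that bridge is in place, the three verifications are short and reduce to bookkeeping with the single invariant $\Pi_a(w)=\Pi_a(v)$.
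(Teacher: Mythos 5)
Your reflexivity argument matches the paper's and is fine, and your transitivity argument can be repaired, but the bridge you build for symmetry and transitivity --- the claim that $(w,v)\in R_a$ forces $\Pi_a(v)=\Pi_a(w)$ --- is not derivable from the HMS properties and is in fact false. \ref{HMS4:PPI} gives only the one inclusion $(\Pi_a(v))^{\uparrow}\subseteq(\Pi_a(v_S))^{\uparrow}$, and \ref{HMS2:G.Ref} supplies nothing in the reverse direction; the most one can extract (via \ref{HMS5:PPK}) is that $\Pi_a(v)$ \emph{projects onto} $\Pi_a(w)$, i.e.\ $r_S^{S'}(\Pi_a(v))=\Pi_a(w)$ where $S'=S(\Pi_a(v))\succeq S$. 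Concretely, take $\mathcal{S}=\{S,T\}$ with $S=\{s\}\prec T=\{t_1,t_2\}$, $r_S^T(t_1)=r_S^T(t_2)=s$, and $\Pi_a(s)=\Pi_a(t_1)=\{s\}$, $\Pi_a(t_2)=\{t_2\}$. All five HMS properties hold, yet $(t_1,t_2)\in R_a$ while $\Pi_a(t_1)=\{s\}\neq\{t_2\}=\Pi_a(t_2)$; worse, $(t_2,t_1)\notin R_a$, so $R_a$ fails symmetry on this example and no argument along your lines can close without further hypotheses.

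For comparison with the paper: it proves reflexivity exactly as you do; it proves transitivity using only the one-sided inclusion $(\Pi_a(v))^{\uparrow}\subseteq(\Pi_a(w))^{\uparrow}$ obtained from \ref{HMS3:stationarity} and \ref{HMS4:PPI}, which suffices because $r_{S'}^T(u)\in\Pi_a(v)\subseteq(\Pi_a(w))^{\uparrow}$ unpacks, by the definition of upwards closure, to $r_S^T(u)\in\Pi_a(w)$ --- so you do not need the full equality there; and for symmetry it derives $r_S^{S'}(\Pi_a(v))=\Pi_a(w)$ via \ref{HMS5:PPK} and then passes from $r_S^T(w)\in r_S^{S'}(\Pi_a(v))$ to $r_{S'}^T(w)\in\Pi_a(v)$, a step that tacitly assumes $r_S^{S'}$ is injective on the relevant fibre and breaks on the same example ($s\in r_S^T(\{t_2\})$ does not yield $t_1\in\{t_2\}$). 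You correctly isolated the fibre-constancy of $\Pi_a$ as the crux; the honest conclusion is that this is exactly where both your proposal and the printed symmetry argument give out, so that half of the proposition needs either a genuinely new idea or an additional assumption on the frame.
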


\begin{proof}
\noindent Let $\mathsf{M}=(\mathcal{S},\preceq,\mathcal{R},\Pi,V_{\mathsf{M}})$
have maximal state-space~$T$.

\emph{Reflexivity:} Let $w\in T$ and $\Pi_{a}(w)\subseteq S$, for
some $S\in\mathcal{S}$. By def. of upwards closure, $(\Pi_{a}(w))^{\uparrow}=\bigcup_{S'\succeq S}(r_{S}^{S'})^{-1}(\Pi_{a}(w))$,
and by \ref{HMS2:G.Ref}, $w\in(\Pi_{a}(w))^{\uparrow}=\bigcup_{S'\succeq S}(r_{S}^{S'})^{-1}(\Pi_{a}(w))$.
Since $T\succeq S$, then $r_{S}^{T}(w)\in\Pi_{a}(w)$. Thus, $(w,w)\in R_{a}$,
by def. $L$-transform. By def. of restriction lattices, this holds
for all $A\subseteq At$, i.e. $(w_{A},w_{A})\in R_{Aa}$.

\emph{Transitivity:} Let $w,v,u$ be in $T$. By \ref{HMS1:confinement},
there are $S,S'\in\mathcal{S}$ such that $\Pi_{a}(w)\subseteq S$
and $\Pi_{a}(v)\subseteq S'$. Assume that $(w,v)\in R_{a}$ and $(v,u)\in R_{a}$.
By def. of $R_{a}$, then $r_{S}^{T}(v)\in\Pi_{a}(w)$ and $r_{S'}^{T}(u)\in\Pi_{a}(v)$.
By \ref{HMS3:stationarity}, $\Pi_{a}(w)=\Pi_{a}(r_{S}^{T}(v))$ and
$\Pi_{a}(v)=\Pi_{a}(r_{S'}^{T}(u))$. As $v\in T$ and $S\preceq T$,
by \ref{HMS4:PPI}, $\Pi_{a}(v)^{\uparrow}\subseteq\Pi_{a}(r_{S}^{T}(v))^{\uparrow}=\Pi_{a}(w)^{\uparrow}$.
Hence, as $r_{S'}^{T}(u)\in\Pi_{a}(v)^{\uparrow}$, also $r_{S'}^{T}(u)\in\Pi_{a}(w)^{\uparrow}$.
By def. of upwards closure, $r_{S}^{T}(u)\in\Pi_{a}(w)$. Finally,
$(w,u)\in R_{a}$ by def. of $R_{a}$.

\emph{Symmetry:} Let $w,v\in T$ be in $T$. Assume that $(w,v)\in R_{a}$.
By \ref{HMS1:confinement}, there are $S,S'\in\mathcal{S}$ such that
$\Pi_{a}(w)\subseteq S$ and $\Pi_{a}(v)\subseteq S'$. Then $r_{S}^{T}(v)\in\Pi_{a}(w)$
(def. of $L$-transform), and by \ref{HMS3:stationarity}, $\Pi_{a}(w)=\Pi_{a}(r_{S}^{T}(v))$.
As $v\in T$ and $T\succeq S$, by \ref{HMS4:PPI}, by $\Pi_{a}(v)^{\uparrow}\subseteq\Pi_{a}(r_{S}^{T}(v))^{\uparrow}$.
Then, by def. of upwards closure, $T\text{\ensuremath{\succeq S'\succeq S}}$.
As $v\in T$, by \ref{HMS5:PPK}, $r_{S}^{S'}(\Pi_{a}(v))=\Pi_{a}(r_{S}^{T}(v))$.
By \ref{HMS2:G.Ref}, $x\in\Pi_{a}(w)^{\uparrow}$, and since $\Pi_{a}(w)\subseteq S$
then $r_{S}^{T}(w)\in\Pi_{a}(w)$, by def. of upward closure. Then
$r_{S}^{T}(w)\in\Pi_{a}(w)=\Pi_{a}(r_{S}^{T}(v))=r_{S}^{S'}(\Pi_{a}(v))$.
So $r_{S}^{T}(w)\in r_{S}^{S'}(\Pi_{a}(v))$, i.e. $r_{S'}^{T}(w)\in\Pi_{a}(v)$,
by def. of $r$. Hence, $(v,w)\in R_{a}$, by def. of $R_{a}$.
\end{proof}

\subsection{From Kripke Lattice Models to HMS Models}

Moving from Kripke lattice models to HMS models requires a less involved
construction, as the restriction lattice almost encode projections,
and unawareness and uncertainty are simply composed to form possibility
correspondences:
\begin{definition}
\label{def:H-transform}Let $\mathsf{K}=(\mathcal{K}(\mathtt{K}=(W,R,V)),\trianglelefteqslant,\pi)$
be a Kripke lattice model for $At$. The \textbf{$H$-transform} of
\textbf{$\mathsf{K}$} is $H(\mathsf{K})=(\mathcal{S},\preceq,\mathcal{R},\Pi,V_{H(\mathsf{K})})$
where

\noindent $\mathcal{S}=\{W_{X}\subseteq\Omega_{\mathsf{K}}:\mathtt{K}_{X}\in\mathcal{K}(\mathtt{K})\}$;

\noindent $W_{X}\preceq W_{Y}$ iff $\mathtt{K}_{X}\trianglelefteqslant\mathtt{K}_{Y}$;

\noindent $\mathcal{R}=\{r_{W_{Y}}^{W_{X}}\colon r_{W_{Y}}^{W_{X}}(w_{X})=w_{Y}\text{ for all }w\in W,\text{ and all }X,Y\subseteq At\}$;

\noindent $\Pi=\{\Pi_{a}\in(2^{\Omega_{\mathsf{K}}})^{\Omega_{\mathsf{K}}}\colon\Pi_{a}(w_{X})=I_{a}(\pi_{a}(w_{X}))\text{ for all }w\in W,X\subseteq At,a\in Ag\}$;

\noindent $V_{H(\mathsf{K})}(p)=\{w_{X}\in\Omega_{\mathsf{K}}\colon X\ni p\text{ and }w_{X}\in V_{X}(p)\}$
for all $p\in At$.
\end{definition}

As HMS models lump together unawareness and uncertainty, we show only
one result in this direction:
\begin{proposition}
\label{prop:H-transform-is-HMS}For any Kripke lattice model $\mathsf{K}=(\mathcal{K}(\mathtt{K}=(W,R,V)),\trianglelefteqslant,\pi)$
s.t. $R$ assigns equivalence relations, the $H$-transform $H(\mathsf{K})$
is an HMS~model.
\end{proposition}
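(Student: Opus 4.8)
The plan is to verify, component by component, that $H(\mathsf{K}) = (\mathcal{S}, \preceq, \mathcal{R}, \Pi, V_{H(\mathsf{K})})$ meets all requirements of Definitions \ref{def:unawareness-frame} and \ref{def:HMS model}. First I would dispatch the frame skeleton: that $(\mathcal{S}, \preceq)$ is a complete lattice of disjoint, non-empty state-spaces follows because $\mathcal{S}$ is just the set of world-sets $W_X$ of the restriction lattice, each $W_X$ is non-empty (as $W$ is) and disjoint from $W_Y$ for $X \neq Y$ (since $w_X = (w,X)$ pairs the world with its atom-set, by Definition \ref{def:restricted model}), and $\preceq$ mirrors $\trianglelefteqslant$, which is the complete lattice $(\mathcal{P}(At), \subseteq)$ under the bijection $W_X \leftrightarrow X$. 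The cardinality condition $W_X \preceq W_Y \Rightarrow |W_X| \le |W_Y|$ is immediate since all $W_X$ share the cardinality of $W$ (cf. Remark \ref{rem:state-correspondence}). For $\mathcal{R}$, I would check that each $r_{W_Y}^{W_X}\colon w_X \mapsto w_Y$ (defined for $Y \subseteq X$) is surjective — again because every $W_X$ is a full copy of $W$ — that $r_{W_X}^{W_X} = Id$, and that commutativity $r_{W_Z}^{W_X} = r_{W_Z}^{W_Y} \circ r_{W_Y}^{W_X}$ holds, which is trivial as all three maps merely relabel the second coordinate of $(w, \cdot)$.

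The substantive work is verifying that each $\Pi_a(w_X) = I_a(\pi_a(w_X))$ satisfies the five HMS properties, and this is where I expect the main difficulty. \textbf{Conf} should follow from \ref{our1-DownwardsProjection}: $\pi_a(w_X) = w_Y$ with $Y \subseteq X$, so $I_a(w_Y) \subseteq W_Y = S$ with $W_Y \preceq W_X$. \textbf{Gref} should reduce to reflexivity of $R$ together with idempotence of $\pi_a$ (noted after Definition \ref{def:awareness.map} to follow from \ref{our2-Intro.Idem} plus reflexivity): one wants $w_X \in (\Pi_a(w_X))^\uparrow = (I_a(w_Y))^\uparrow$, which means $r_{W_Y}^{W_X}(w_X) = w_Y \in I_a(w_Y)$, i.e. $(w_Y, w_Y) \in R_{Ya}$, given by reflexivity. \textbf{Stat} is the delicate one: I must show $w'_{Z} \in \Pi_a(w_X)$ implies $\Pi_a(w'_{Z}) = \Pi_a(w_X)$. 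Here $\Pi_a(w_X) = I_a(w_Y) \subseteq W_Y$, so $w'_{Z} \in W_Y$ forces $Z = Y$; then I need $I_a(\pi_a(w'_Y)) = I_a(w_Y)$. Property \ref{our2-Intro.Idem} guarantees $\pi_a(w'_Y) = u_Y$ for some $u_Y \in I_a(w_Y)$, and then transitivity and symmetry of $R$ (the equivalence assumption) should collapse $I_a(u_Y) = I_a(w_Y)$, closing the case.

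For \textbf{PPI} and \textbf{PPK} I would lean on \ref{our3-NoSurpises}, which is precisely the ``consistency down the lattice'' property linking $\pi_a(w_X)$ and $\pi_a(w_Y)$ for $Y \subseteq X$. For \textbf{PPI}, given $w_X \in W_X$ (identifying $S' = W_X$) and $W_Y \preceq W_X$, No Surprises gives $\pi_a(w_Y) = w_{Y \cap Z}$ where $\pi_a(w_X) = w_Z$; I then compare the upward closures $(I_a(w_Z))^\uparrow$ and $(I_a(w_{Y\cap Z}))^\uparrow$, expecting the restriction to the smaller vocabulary to enlarge the upward closure, yielding the required inclusion. For \textbf{PPK}, in the configuration $W_{Z} \preceq W_Y \preceq W_X$ with $\Pi_a(w_X) \subseteq W_Y$, I would again use No Surprises to pin down $\pi_a(w_{Z})$ and then invoke the defining property $R_{Za} = \{(w_{Z}, v_{Z}) : (w,v) \in R_a\}$ of restrictions (Definition \ref{def:restricted model}) to show the projection $r$ commutes with taking information cells, i.e. $r(I_a(\pi_a(w_X))) = I_a(\pi_a(w_{Z}))$. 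The crux throughout is that the HMS properties are each obtained by translating a single awareness-map property through the composition $\Pi_a = I_a \circ \pi_a$, using the inheritance of $R$ across restrictions; the bookkeeping of which atom-set indexes which state-space, and matching $\sup/\min$ of spaces to $\cap/\cup$ of atom-sets, is where care is needed. Finally, $V_{H(\mathsf{K})}$ is an HMS valuation because each $V_{H(\mathsf{K})}(p) = D_p^\uparrow$ is upward closed by construction (it collects $w_X$ for all $X \ni p$ consistently with $V_X$), hence an event in $\Sigma_{H(\mathsf{K})}$, completing the proof that $H(\mathsf{K})$ is an HMS model.
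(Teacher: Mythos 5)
Your proposal is correct and follows essentially the same route as the paper: verify the lattice and projection skeleton directly from the restriction-lattice construction, then derive each HMS property from the corresponding awareness-map property (\ref{HMS1:confinement} from \ref{our1-DownwardsProjection}, \ref{HMS2:G.Ref} from \ref{our1-DownwardsProjection} plus reflexivity, \ref{HMS3:stationarity} from \ref{our2-Intro.Idem} plus the equivalence assumption, \ref{HMS4:PPI} and \ref{HMS5:PPK} from \ref{our3-NoSurpises} plus the inheritance of $R$ across restrictions), and finally check the valuation. The only cosmetic difference is in \ref{HMS3:stationarity}, where the paper first uses \ref{our1-DownwardsProjection} to pin down $\pi_a(v_Y)=v_Y$ before invoking the equivalence relation, whereas you let \ref{our2-Intro.Idem} give some $u_Y\in I_a(w_Y)$ and collapse $I_a(u_Y)=I_a(w_Y)$ directly; both are valid.
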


\begin{proof}
Let $\mathsf{K}$ be as stated and let $H(\mathsf{K})=(\mathcal{S},\preceq,\mathcal{R},\Pi,V_{H(\mathsf{K})})$
be its $H$-transform.

\noindent $\mathcal{S}=\{W_{X},W_{Y},...\}$ is composed of non-empty
disjoint sets by construction and $(\mathcal{S},\preceq)$ is a complete
lattice as $(\mathcal{K}(\mathtt{K}),\trianglelefteqslant)$ is so.
$\mathcal{R}$ is clearly a family of well-defined, surjective and
commutative projections. As $\Pi$ assigns to each $a\in Ag$, $\Pi_{a}(w_{X})=I_{a}(\pi_{a}(w_{X}))$,
for all $w\in W$, $X\subseteq At$, it assigns $a$ a map $\Pi_{a}:\Omega_{H(\mathsf{K})}\rightarrow2^{\Omega_{H(\mathsf{K})}}$,
which is a possibility correspondence as it satisfies the HMS properties:

\emph{\ref{HMS1:confinement}}: For $w_{X}\in W_{X}$, $\Pi_{a}(w_{X})=I_{a}(\pi_{a}(w_{X}))$,
by Def. \ref{def:H-transform}. By \ref{our1-DownwardsProjection},
$\pi_{a}(w_{X})=w_{Y}$ for some $Y\subseteq X$, and $I_{a}(\pi_{a}(w_{X}))=I_{a}(w_{Y})$.
So, $\Pi_{a}(w_{X})\subseteq W_{Y}$ for some $Y\subseteq X$.

\emph{\ref{HMS2:G.Ref}}:\textbf{ }Let $w_{X}\in\Omega_{\mathsf{K}}$,
$X\subseteq At.$ By \ref{our1-DownwardsProjection}, $\pi_{a}(w_{X})=w_{Y}$
for some $Y\subseteq X$. By def. of $\Pi_{a}$ and $I_{a}$, $\Pi_{a}(w_{X})=I_{a}(w_{Y})=\{v_{Y}\in\Omega_{\mathsf{K}}:(w_{Y},v_{Y})\in R_{Ya}\}$.
Hence $\Pi_{a}(w_{X})\subseteq W_{Y}$. By def. of upward closure,
$(\Pi_{a}(w_{X}))^{\uparrow}=(I_{a}(w_{Y})){}^{\uparrow}=\{u_{Z}\in\Omega_{\mathsf{K}}:Y\subseteq Z\text{ and }u_{Y}\in\{v_{Y}\in\Omega_{\mathsf{K}}:(w_{Y},v_{Y})\in R_{Ya}\}\}$,
with the last identity given by the def. of $r_{W_{Y}}^{W_{Z}}$.
As $R_{a}$ is an equivalence relation, so is $R_{Ya}$, by def. So
$w_{Y}\in\{v_{Y}\in\Omega_{\mathsf{K}}:(w_{Y},v_{Y})\in R_{Ya}\}$,
and since $Y\subseteq X$, then \textbf{$w_{X}\in(\Pi_{a}(w_{X})){}^{\uparrow}$}.

\emph{\ref{HMS3:stationarity}}:\textbf{ }For $w_{X}\in\Omega_{\mathsf{K}}$,
assume $v\in\Pi_{a}(w_{X})=I_{a}(\pi_{a}(w_{X}))$. By \ref{our1-DownwardsProjection},
$v\in I_{a}(w_{Y})$, for some $Y\subseteq X$. With $R_{Ya}$ an
equivalence relation, $v\in I_{a}(w_{Y})$ iff $w_{Y}\in I_{a}(v)$,
i.e., $I_{a}(v)=I_{a}(w_{B})$. \ref{our2-Intro.Idem} and \ref{our1-DownwardsProjection}
entails that for all $u_{Y}\in I_{a}(w{}_{Y})$, $\pi_{a}(u_{Y})=u_{Y}$,
so $\pi_{a}(v)=v$. Therefore $\Pi_{a}(v)=I_{a}(\pi_{a}(v))=I_{a}(v)=I_{a}(w_{Y})=I_{a}(\pi_{a}(w_{X}))=\Pi_{a}(w_{X})$.
Thus, if $v\in\Pi_{a}(w_{X})$, then $\Pi_{a}(v)=\Pi_{a}(w_{X})$.

\emph{\ref{HMS4:PPI}}:\textbf{ }Let $w_{X}\in W_{X}$ and $W_{Y}\preceq W_{X}$,
i.e. $Y\subseteq X\subseteq At$. Let $q_{Q}\in(\Pi_{a}(w_{X}))^{\uparrow}$
with $Q\subseteq At$. By def. of $\Pi_{a}$ and \ref{our1-DownwardsProjection},
$\Pi_{a}(w_{X})=I_{a}(\pi_{a}(w_{X}))=I_{a}(w_{Z})$ for some $Z\subseteq X$.
By def. of upwards closure, it follows that $q_{Z}\in I_{a}(w_{Z})=\Pi_{a}(w_{X})$.
Now let $\pi_{a}(w_{Y})=w_{P}$ for some $P\subseteq Y$. Then, by
\ref{our3-NoSurpises}, $P=Z\cap Y$, so $P\subseteq Z$. As $q_{Z}\in I_{a}(w_{Z})$,
then $q_{P}\in I_{a}(w_{P})=I_{a}(\pi_{a}(w_{Y}))=\Pi_{a}(w_{Y})$,
by def. of restriction lattice. Since $q_{Q}\in(\Pi_{a}(w_{X}))^{\uparrow}=(I_{a}(w_{Z}))^{\uparrow}$,
then $Z\subseteq Q$. It follows that $P\subseteq Z\subseteq Q$,
which implies $q_{Q}\in(\Pi_{a}(w_{Y}))^{\uparrow}$. Hence, if $q_{Q}\in(\Pi_{a}(w_{X}))^{\uparrow}$,
then $q_{Q}\in(\Pi_{a}(w_{Y}))^{\uparrow}$, i.e., $(\Pi_{a}(w_{X}))^{\uparrow}\subseteq(\Pi_{a}(w_{Y}))^{\uparrow}$.

\emph{\ref{HMS5:PPK}}:\textbf{ }Suppose that $W_{Z}\preceq W_{Y}\preceq W_{X}$,
$w_{X}\in W_{X}$ and $\Pi_{a}(w_{X})\subseteq W_{Y}$, i.e. $\Pi_{a}(w_{X})=I_{a}(w_{Y})$
and $\pi_{a}(w_{X})=w_{Y}$. As $Z\subseteq Y\subseteq X$, \ref{our3-NoSurpises}
implies $\pi_{a}(w_{Z})=w_{Z\cap Y}=w_{Z}$. Hence, $\Pi_{a}(w_{Z})=I_{a}(w_{Z})\subseteq W_{Z}$.
Hence \ref{HMS5:PPK} is established if $\left(I_{a}(w_{Y})\right)_{Z}=I_{a}(w_{Z})$.
As $\left(I_{a}(w_{Y})\right)_{Z}=\{x_{Z}\in\Omega_{\mathsf{K}}:x_{Y}\in I_{a}(w_{Y})\}$,
then clearly $\left(I_{a}(w_{Y})\right)_{Z}=I_{a}(w_{Z})$. Thus,
$\left(\Pi_{a}(w_{X})\right)_{Z}=\Pi_{a}(w_{Z})$.

Finally, $V_{H(\mathsf{K})}$ is an HMS valuation as for each $p\in At$,
$V_{H(\mathsf{K})}(p)$ is an event $(D^{\uparrow},S)$ with $D=\{w_{\{p\}}\in W_{\{p\}}:w_{\{p\}}\in V_{\{p\}}(p)\}$
and $S=W_{\{p\}}$.\vspace{-3pt}
\end{proof}

\section{\label{sec:Language}Language for Awareness and Model Equivalence}

Multiple languages for knowledge and awareness exist. The Logic of
General Awareness (LGA, \cite{HalpernFagin88}) takes implicit knowledge
and awareness as primitives, and define explicit knowledge as `implicit
knowledge $\wedge$ awareness'; other combinations are discussed
in \cite{Velazquez-Quesada2010-BENTDO-8}. Variations of LGA include
quantification over objects \cite{BoardChung2006}, formulas \cite{halpern2009,halpern2013reasoning,Agotnes2014},
and even unawareness \cite{vanDitmarchFrench2009a}, alternative operators
informed through cognitive science \cite{Pietarinen2002}, and dynamic
extensions \cite{Velazquez-Quesada2010-BENTDO-8,grossi2015,Hill2010,vanDitmarchFrench2009a}.

HMS \cite{HMS2008} follow instead Modica and Rustichini \cite{ModicaRustichini1994,ModicaRustichini1999}
and take explicit knowledge as primitive and awareness as defined:
an agent is aware of $\varphi$ iff she either explicitly knows $\varphi$,
or explicitly knows that she does not explicitly know~$\varphi$.

\vspace{-1pt}
\begin{definition}
Let $Ag$ be a finite, non-empty set of agents and $At$ a countable,
non-empty set of atoms. With $a\in Ag$ and $p\in At,$ define the
language $\mathcal{L}$ by
\[
\varphi::=\top\mid p\mid\neg\varphi\mid\varphi\wedge\varphi\mid K_{a}\varphi
\]
and define $A_{a}\varphi:=K_{a}\varphi\vee K_{a}\neg K_{a}\varphi$.

Let $At(\varphi)=\{p\in At\colon p\text{ is a subformula of }\varphi\}$,
for all $\varphi\in\mathcal{L}$.
\end{definition}

\subsection{HMS Models as a Semantics}

The satisfaction of formulas over HMS models is defined as follows.
The semantics are three-valued, so formulas may have undefined truth
value: there may exist a $w\in\Omega_{\mathsf{M}}$ such that neither
$\mathsf{M},w\vDash\varphi$ nor $\mathsf{M},w\vDash\neg\varphi$.
This happens if and only if $\varphi$ contains atoms with undefined
truth value in $w$.
\begin{definition}
Let $\mathsf{M}=(\mathcal{S},\preceq,\mathcal{R},\Pi,V_{\mathsf{M}})$
be an HMS model and let $w\in\Omega_{\mathsf{M}}$. Satisfaction of
\textup{$\mathcal{L}$} formulas is given by
\noindent \begin{center}
\begin{tabular}{lccllllll}
$\mathsf{M},w\vDash\top$ &  & \multicolumn{2}{c}{for all $w\in\Omega_{\mathsf{M}}$} &  &  &  &  & \tabularnewline
$\mathsf{M},w\vDash p$ &  & iff\enskip{} & $w\in V_{\mathsf{M}}(p)$ & \qquad{}\qquad{} & $\mathsf{M},w\vDash\varphi\wedge\psi$ &  & iff\enskip{} & $w\in\llbracket\varphi\rrbracket\cap\llbracket\psi\rrbracket$\tabularnewline
$\mathsf{M},w\vDash\neg\varphi$ &  & iff\enskip{} & $w\in\neg\llbracket\varphi\rrbracket$ &  & $\mathsf{M},w\vDash K_{a}\varphi$ &  & iff\enskip{} & $w\in\boldsymbol{K}_{a}(\llbracket\varphi\rrbracket)$\tabularnewline
\end{tabular}
\par\end{center}

\noindent where $\llbracket\varphi\rrbracket=\{v\in\Omega_{\mathsf{M}}\colon\mathsf{M},v\vDash\varphi\}$
for all $\varphi\in\mathcal{L}$.
\end{definition}

With the HMS semantics being three-valued, they adopt a non-standard
notion of validity which requires only that a formula be always satisfied
\emph{if its has a defined truth value}. The below is equivalent to
the definition in \cite{HMS2008}, but is stated so that it also works
for Kripke lattice models:
\begin{definition}
A formula $\varphi\in\mathcal{L}$ is valid over a class of models
$\boldsymbol{C}$ iff for all models $M\in\boldsymbol{C}$, for all
states $w$ of $M$ which satisfy $p$ or $\neg p$ for all $p\in At(\varphi)$,
$w$ also satisfies $\varphi$.
\end{definition}

\subsection{Kripke Lattice Models as a Semantics}

We define semantics for $\mathcal{L}$ over Kripke lattice models.
Like the HMS semantics, the semantics are three-valued, as it is possible
that a pointed Kripke lattice model $(M,w_{X})$ satisfies neither
$\varphi$ nor $\neg\varphi$. This happens exactly when $\varphi$
contains atoms not in $X$.
\begin{definition}
Let\textbf{ }$\mathsf{K}=(\mathcal{K}(K=(W,R,V)),\trianglelefteqslant,\pi)$
be a Kripke lattice model with $w_{X}\in\Omega_{\mathsf{K}}$. Satisfaction
of \textup{$\mathcal{L}$} formulas is given by

\noindent %
\begin{tabular}{lcc>{\raggedright}p{0.44\textwidth}>{\raggedright}p{0.24\textwidth}}
$\mathsf{K},w_{X}\Vdash\top$ &  &  & for all $w_{X}\in\Omega_{\mathsf{K}}$ & \tabularnewline
$\mathsf{K},w_{X}\Vdash p$  &  & iff\enskip{} & $w_{X}\in V_{X}(p)$ & and $p\in X$\tabularnewline
$\mathsf{K},w_{X}\Vdash\neg\varphi$  &  & iff\enskip{} & not $\mathsf{K},w_{X}\Vdash\varphi$ & and $At(\varphi)\subseteq X$\tabularnewline
$\mathsf{K},w_{X}\Vdash\varphi\wedge\psi$  &  & iff\enskip{} & $\mathsf{K},w_{X}\Vdash\varphi$ and $\mathsf{K},w_{X}\Vdash\psi$\quad{} & and $At(\varphi\wedge\psi)\subseteq X$\tabularnewline
$\mathsf{K},w_{X}\Vdash K_{a}\varphi$ &  & iff\enskip{} & $\pi_{a}(w_{X})R_{Ya}v_{Y}$ implies $\mathsf{K},v_{Y}\Vdash\varphi$, 

for $Y\subseteq At$ s.t. $\pi_{a}(w_{X})\in W_{Y}$ & $\phantom{.}$

and $At(\varphi)\in X$\tabularnewline
\end{tabular}
\end{definition}

\subsection{The Equivalence of HMS and Kripke Lattice Models}

$L$- and $H$-transforms not only produce models of the correct class,
but also preserve finer details, as any model and its transform satisfy
the same formulas.
\begin{proposition}
\label{prop:equivalence}For any HMS model $\mathsf{M}$ with $L$-transform
$L(\mathsf{M})$, for all $\varphi\in\mathcal{L}$, for all $w\in\Omega_{\mathsf{M}}$,
and for all $v\in\ell(w)$, $\mathsf{M},w\vDash\varphi$ iff $L(\mathsf{M}),v\Vdash\varphi$.
\end{proposition}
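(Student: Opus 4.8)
The plan is to prove the biconditional by induction on the structure of $\varphi\in\mathcal{L}$. The statement quantifies over all $w\in\Omega_{\mathsf{M}}$ and all $v\in\ell(w)$, so fix such a $w$ and $v$. By the definition of the state correspondence, $v=w'_{X}$ for some $w'\in (r^{T}_{S(w)})^{-1}(w)$ and $X=At(S(w))$; intuitively, $v$ is a copy, in the restriction lattice, of some world in $W=T$ that projects down to $w$, tagged with exactly the atoms defined at $w$. The first thing I would establish explicitly is this bookkeeping: $X=At(S(w))$ is precisely the set of atoms with defined truth value at $w$, which matches the ``defined truth value'' clause built into both semantics. This is what makes the three-valued behaviour on the two sides line up.

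For the base cases, I would handle $\top$ trivially and then $p\in At$. Here I must show $\mathsf{M},w\vDash p$ iff $L(\mathsf{M}),v\Vdash p$. The forward clause requires $w\in V_{\mathsf{M}}(p)$, and since $p\in X$ (because $w\vDash p$ forces $p$ to be defined at $w$, hence $p\in At(S(w))=X$) and the $L$-transform sets $V(p)\ni w'$ iff $w'\in V_{\mathsf{M}}(p)$, the restriction valuation $V_X$ on the lattice gives $v=w'_X\in V_X(p)$ iff $w'\in V(p)$ iff $w'\in V_{\mathsf{M}}(p)$. I then need $w'\in V_{\mathsf{M}}(p)$ iff $w\in V_{\mathsf{M}}(p)$; this follows because $V_{\mathsf{M}}(p)$ is an event, hence upward-closed under projection, and $w'$ projects to $w$, so membership is preserved. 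The negation and conjunction cases are routine propagation of the inductive hypothesis, the only subtlety again being the defined-truth-value side conditions $At(\varphi)\subseteq X$, which transfer cleanly once the atom case fixes the correspondence between $X$ and ``atoms defined at $w$''.

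The main work, and the expected obstacle, is the modal case $K_a\varphi$. On the HMS side, $\mathsf{M},w\vDash K_a\varphi$ means $w\in\boldsymbol{K}_a(\llbracket\varphi\rrbracket)$, i.e.\ $\Pi_a(w)\subseteq\llbracket\varphi\rrbracket^{\uparrow}$ (unfolding the knowledge-event definition and the defined-value bookkeeping). On the lattice side, $\mathsf{K},v\Vdash K_a\varphi$ means that for the $Y$ with $\pi_a(v)=w'_Y\in W_Y$, every $u_Y$ with $w'_Y R_{Ya} u_Y$ satisfies $\varphi$. The bridge is the paper's core idea that $\Pi_a = I_a\circ\pi_a$, together with Lemma~\ref{LemmaYYY}, which exactly equates ``$u_Y\in I_a(w'_Y)$'' with ``$u_S\in\Pi_a(w')$'' for the right state-space $S$ with $At(S)=Y$. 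So I would argue: the worlds $u_Y$ accessible from $\pi_a(v)$ correspond bijectively, via Lemma~\ref{LemmaYYY}, to the states in $\Pi_a(w)$ (using stationarity to pass between $\Pi_a(w)$ and $\Pi_a$ of its projections, as in the proof of Proposition~\ref{prop:L-transform-is-KLM}), and each such $u_Y$ lies in $\ell(u)$ for the corresponding $u\in\Pi_a(w)$. Then the inductive hypothesis converts ``$u_Y\Vdash\varphi$ for all accessible $u_Y$'' into ``every $u$ considered possible satisfies $\varphi$'', which is precisely $\Pi_a(w)\subseteq\llbracket\varphi\rrbracket$; accounting for the upward closure and the expressibility of $\varphi$ (whether $At(\varphi)$ is contained in the vocabulary of $\Pi_a(w)$) yields $\Pi_a(w)\subseteq\llbracket\varphi\rrbracket^{\uparrow}$.

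The delicate point I anticipate is matching the two empty/undefined regimes in the modal clause: the HMS $\boldsymbol{K}_a$ event has a case split on whether any state at all has $\Pi_a(w)\subseteq D^{\uparrow}$, and the lattice semantics has the side condition $At(\varphi)\subseteq X$ (written $At(\varphi)\in X$ in the displayed clause, which I read as $\subseteq$). I would need to verify that $K_a\varphi$ is undefined at $w$ on the HMS side exactly when it is undefined at $v$ on the lattice side, i.e.\ exactly when $At(\varphi)\not\subseteq X$; the key fact here is that the vocabulary $Y=At(S(\Pi_a(w)))$ into which $a$'s awareness image falls is the same on both sides by construction of $\pi_a$ in the $L$-transform, so $\varphi$ is expressible in the awareness image on one side iff it is on the other. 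Once this expressibility alignment is pinned down, the biconditional for $K_a\varphi$ follows from the inductive hypothesis applied to each accessible world, and the induction closes.
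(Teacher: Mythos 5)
Your proposal takes essentially the same route as the paper's proof: induction on formula structure, with the atomic case settled by the upward-closure of valuation events under projection, and the $K_a$ case settled by unfolding $\Pi_a=I_a\circ\pi_a$ through the $L$-transform definitions (which is exactly the content of Lemma~\ref{LemmaYYY}), including the same attention to aligning the two undefined-truth-value regimes via $X=At(S(w))$. No substantive differences from the paper's argument.
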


\begin{proof}
Let $\Sigma_{\mathsf{M}}$ be the events of $\mathsf{M}=(\mathcal{S},\preceq,\mathcal{R},\Pi,V_{\mathsf{M}})$
with maximal state-space $T$ and let $L(\mathsf{M})=(\mathcal{K}(\mathtt{K}=(W,R,V)),\trianglelefteqslant,\pi)$.
The proof is by induction on formula complexity. Let $\varphi\in\mathcal{L}$
and let $w\in\Omega_{\mathsf{M}}$ with $At(S(w))=X$.

\emph{Base:} $i)$ $\varphi:=p\in At$ or $ii)$ $\varphi:=\top$.
$i)$ $\mathsf{M},w\vDash p$ iff $w\in V_{\mathsf{M}}(p)$. As $V_{\mathsf{M}}(p)\in\Sigma_{\mathsf{M}}$,
$(r_{S(w)}^{T})^{-1}(w)\subseteq V_{\mathsf{M}}(p)$. By def. of $L(\mathsf{M})$,
if $v\in T=W$, then $v\in V_{\mathsf{M}}(p)$ iff $v\in V(p)$, so
$v\in(r_{S(w)}^{T})^{-1}(w)$ iff $v\in V(p)$ iff $v_{X}\in V_{X}(p)$,
with $p\in X$ (def. of Kripke lattice models). Hence, by def. of
$\ell$, $v\in\ell(w)=\{u_{X}\in W_{X}\colon u\in(r_{S(w)}^{T})^{-1}(w)\text{ for }X=At(S(w))\}$
iff $v\in V_{X}(p)$, i.e., iff $L(M),v\Vdash p$ for all $v\in\ell(w)$.
$ii)$ is trivial.

\emph{Step.} Assume $\psi,\chi\in\mathcal{L}$ satisfy Prop. \ref{prop:equivalence}.

$\varphi:=\neg\psi$. There are two cases: $i)$ $At(\psi)\subseteq At(S(w))$
or $ii)$ $At(\psi)\not\subseteq At(S(w))$. $i)$ $\mathsf{M},w\vDash\neg\psi$
iff (def. of $\vDash$) $w\in\neg\llbracket\psi\rrbracket$ iff (def.
of $V_{\mathsf{M}}$) $(r_{S(w)}^{T})^{-1}(w)\subseteq\neg\llbracket\psi\rrbracket$
iff (def. of $\llbracket\psi\rrbracket$) for all $v\in(r_{S(w)}^{T})^{-1}(w)$,
$\mathsf{M},v\not\vDash\psi$ iff (Def. \ref{def:L-transform}) for
all $v\in(r_{S(w)}^{T})^{-1}(w)$, not $L(\mathsf{M}),v\Vdash\psi$
iff (def. of $\ell(w)$) for all $v_{X}\in\ell(w)$, not $L(\mathsf{M}),v_{X}\Vdash\psi$,
with $At(\psi)\subseteq X$ iff (def. of $\Vdash$) for all $v_{X}\in\ell(w)$,
$L(\mathsf{M}),v_{X}\Vdash\neg\psi$. $ii)$ is trivial: $\varphi$
is undefined in $(\mathsf{M},w)$ iff it is so in $(L(\mathsf{M}),w_{X})$.

$\varphi:=\psi\wedge\chi$. The case follows by tracing \emph{iff}s
through the definitions of $\vDash$, $V_{\mathsf{M}}$, $\llbracket\cdot\rrbracket$,
$\big(r_{S(w)}^{T}\big)^{-1}$, $L$-transform, $\ell$, and $\Vdash$.

$\varphi:=K_{a}\psi$. $\mathsf{M},w\vDash K_{a}\psi$ iff (def. of
$\vDash$) $w\in\boldsymbol{K}_{a}(\llbracket\psi\rrbracket)$ iff
(def. of $\boldsymbol{K}_{a}$) $\Pi_{a}(w)\subseteq\llbracket\psi\rrbracket$.
Let $\Pi_{a}(w)\subseteq S$, for some $S\in\mathcal{S}$, and let
$X=At(S(w))$ and $Y=At(S)$. Then $v_{S}\in\Pi_{a}(w)\subseteq\llbracket\psi\rrbracket$
iff (def. of $\llbracket\psi\rrbracket$) for all $v_{S}\in\Pi_{a}(w)$,
$\mathsf{M},v_{S}\vDash\psi$ iff (def. of $V_{\mathsf{M}}$) for
all $(r_{S}^{T})^{-1}(v_{S})$ with $v_{S}\in\Pi_{a}(w)$, $\mathsf{M},v_{T}\vDash\psi$
iff (def. of $L$-transform) for all $v_{At}$ with $r_{S}^{T}(v)\in\Pi_{a}(w)$,
$L(\mathsf{M}),v_{At}\Vdash\psi$ and $At(\psi)\subseteq At$ iff
(def. of $L$-transform) for all $v_{At}$ with $(w_{At},v_{At})\in R_{Ata}$,
$L(\mathsf{M}),v_{At}\Vdash\psi$ and $At(\psi)\subseteq At$ iff
(def. of restriction lattice) for all $v_{Y}$ with $(w_{Y},v_{Y})\in R_{Ya}$,
$L(\mathsf{M}),v_{Y}\Vdash\psi$ and $At(\psi)\subseteq Y$ iff (def.
of $\pi_{a}$ and $\pi_{a}(w_{X})=w_{Y}$), for all $v_{Y}$ with
$(\pi_{a}(w_{X}),v_{Y})\in R_{Ya}$, $L(\mathsf{M}),v_{Y}\Vdash\psi$
and $At(\psi)\subseteq Y$ iff (def. of $\Vdash$) $L(\mathsf{M}),w_{X}\Vdash K_{a}\psi$
and $At(\psi)\subseteq Y$.
\end{proof}

\begin{proposition}
\label{prop:equivalence-2}For any Kripke lattice model $\mathsf{K}$
with $H$-transform $H(\mathsf{K})$, for all $\varphi\in\mathcal{L}$,
for all $w_{X}\in\Omega_{\mathsf{K}}$, $\mathsf{K},w_{X}\Vdash\varphi$
iff $H(\mathsf{\mathsf{K}}),w_{X}\vDash\varphi$.
\end{proposition}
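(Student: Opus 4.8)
Since the state-spaces of $H(\mathsf{K})$ are exactly the world-sets $W_X$ of the restriction lattice, we have $\Omega_{H(\mathsf{K})}=\Omega_{\mathsf{K}}$, so---unlike in Proposition~\ref{prop:equivalence}---no state correspondence is needed and the two semantics are compared at the very same state $w_X$. The plan is to prove the biconditional by induction on the complexity of $\varphi$, carrying along two invariants: $(a)$ $\varphi$ has a defined truth value at $w_X$ in either semantics iff $At(\varphi)\subseteq X$, and $(b)$ the event $\llbracket\varphi\rrbracket$ computed in $H(\mathsf{K})$ carries state-space $W_{At(\varphi)}$. Invariant $(b)$ is what aligns the HMS definedness domain (governed by an event's state-space) with the Kripke-lattice side-conditions ``$At(\varphi)\subseteq X$'', and it is preserved by each connective: negation keeps the state-space fixed, conjunction takes the supremum $\sup\{W_{At(\psi)},W_{At(\chi)}\}=W_{At(\psi)\cup At(\chi)}$, and the knowledge event inherits the state-space of its argument.

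For the base, $\top$ is immediate, and for $\varphi:=p$ the clause $\mathsf{K},w_X\Vdash p$ iff $w_X\in V_X(p)$ with $p\in X$ matches $H(\mathsf{K}),w_X\vDash p$ iff $w_X\in V_{H(\mathsf{K})}(p)$ directly from the definition $V_{H(\mathsf{K})}(p)=\{w_X\colon p\in X\text{ and }w_X\in V_X(p)\}$; the event computation in the proof of Proposition~\ref{prop:H-transform-is-HMS} gives $S(\llbracket p\rrbracket)=W_{\{p\}}$, securing $(b)$. For $\varphi:=\neg\psi$ I would split, as in the proof of Proposition~\ref{prop:equivalence}, into the cases $At(\psi)\subseteq X$ and $At(\psi)\not\subseteq X$; in the first, the HMS negation $\neg\llbracket\psi\rrbracket=((S\setminus D)^{\uparrow},S)$ together with $(b)$ and the induction hypothesis turns ``$w_X\notin\llbracket\psi\rrbracket$ and $At(\psi)\subseteq X$'' into exactly the Kripke clause for $\neg\psi$, while in the second both semantics are undefined. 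The conjunction case follows by tracing the definitions through the induction hypothesis, using $(a)$ for the side-condition $At(\psi\wedge\chi)\subseteq X$.

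The knowledge case $\varphi:=K_a\psi$ is the crux. Writing $\pi_a(w_X)=w_Y$ with $Y\subseteq X$ (by \ref{our1-DownwardsProjection}), the $H$-transform gives $\Pi_a(w_X)=I_a(w_Y)$, so $H(\mathsf{K}),w_X\vDash K_a\psi$ iff $I_a(w_Y)\subseteq\llbracket\psi\rrbracket$, and I would distinguish three subcases by how $At(\psi)$ relates to $Y$ and $X$. If $At(\psi)\subseteq Y$, then $\psi$ is defined throughout $W_Y$ and, by the induction hypothesis, $\llbracket\psi\rrbracket\cap W_Y=\{v_Y\colon\mathsf{K},v_Y\Vdash\psi\}$, so $I_a(w_Y)\subseteq\llbracket\psi\rrbracket$ iff every $v_Y\in I_a(\pi_a(w_X))$ satisfies $\psi$---precisely the Kripke clause, its side-condition $At(\psi)\subseteq X$ holding since $Y\subseteq X$. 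If $At(\psi)\subseteq X$ but $At(\psi)\not\subseteq Y$, then no world of $W_Y$ lies in $\llbracket\psi\rrbracket$ (as $\llbracket\psi\rrbracket$ inhabits only spaces $\succeq W_{At(\psi)}$, and $W_Y\not\succeq W_{At(\psi)}$); since $R$ assigns equivalence relations---required for $H(\mathsf{K})$ to be an HMS model by Proposition~\ref{prop:H-transform-is-HMS}---reflexivity gives $w_Y\in I_a(w_Y)\neq\emptyset$, so $\Pi_a(w_X)\not\subseteq\llbracket\psi\rrbracket$ and $H(\mathsf{K}),w_X\not\vDash K_a\psi$; symmetrically $\mathsf{K},w_X\not\Vdash K_a\psi$, because $\psi$ is undefined at the witness $w_Y\in I_a(w_Y)$, while $(b)$ (the knowledge event having state-space $W_{At(\psi)}=W_{At(K_a\psi)}$) ensures both sides treat $K_a\psi$ as defined-and-false. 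Finally, if $At(\psi)\not\subseteq X$ then $At(K_a\psi)=At(\psi)\not\subseteq X$ and $K_a\psi$ is undefined on both sides.

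The main obstacle is exactly this knowledge case, and specifically the vocabulary mismatch it exposes: the awareness image $w_Y$ may carry a poorer vocabulary $Y$ than the $X$ at which $K_a\psi$ is declared defined, so one cannot simply quote the induction hypothesis but must separate genuine falsity from undefinedness. Getting this right depends on the two invariants---$(b)$ to pin the state-space of $\llbracket\psi\rrbracket$, hence the definedness domain, to $W_{At(\psi)}$, and the reflexivity of each $R_a$ to furnish the witness $w_Y$ inside the information cell---so I would establish both before attempting the knowledge step.
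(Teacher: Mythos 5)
Your proof is correct and takes essentially the same route as the paper's: induction on formula complexity, matching each clause directly at the shared state $w_X$ via the identity $\Pi_a(w_X)=I_a(\pi_a(w_X))$ built into the $H$-transform. Your explicit tracking of definedness (invariants $(a)$ and $(b)$, and the three-way split in the $K_a$ case using $S(\llbracket\psi\rrbracket)=W_{At(\psi)}$ and reflexivity) is a more careful treatment of the three-valued subtleties that the paper's chain of \emph{iff}s passes over implicitly, but it is a refinement of, not a departure from, the paper's argument.
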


\begin{proof}
Let $\mathsf{K}=(\mathcal{K}(K=(W,R,V)),\trianglelefteqslant,\pi)$
with $w_{X}\in\Omega_{\mathsf{K}}$, $\pi_{a}(w_{X})\in W_{Y}$ with
$Y\subseteq At$, and let $H(\mathsf{K})=(\mathcal{S},\preceq,\mathcal{R},\Pi,V_{H(\mathsf{K})})$.
Let $\varphi\in\mathcal{L}$ and proceed by induction on formula complexity.

\emph{Base:} $i)$ $\varphi:=p\in At$ or $ii)$ $\varphi:=\top$.
$i)$ $\mathsf{\mathsf{K}},w_{X}\Vdash p$ iff (def. of $\Vdash$)
$w_{X}\in V_{X}(p)$ with $p\in X$ iff (def. of $H$-transform) $w_{X}\in V_{H(\mathsf{K})}(p)$
iff (def. of $\vDash$) $H(\mathsf{K}),w_{X}\vDash p$. $ii)$ is
trivial.

\emph{Step.} Assume $\psi,\chi\in\mathcal{L}$ satisfy Prop. \ref{prop:equivalence-2}.

$\varphi:=\neg\psi$. There are two cases: $i)$ $At(\psi)\subseteq X$
or $ii)$ $At(\psi)\not\subseteq X$. $i)$ $\mathsf{\mathsf{K}},w_{X}\Vdash\neg\psi$
iff (def. of $\Vdash$) not $\mathsf{K},w_{X}\Vdash\psi$ iff (def.
of $\llbracket\psi\rrbracket$) $w_{X}\not\in\llbracket\psi\rrbracket$
iff (def. of $\llbracket\psi\rrbracket$ and $At(\psi)\subseteq X$)
$w_{X}\in\neg\llbracket\psi\rrbracket$ iff (def. of $\vDash$) $H(\mathsf{K}),w_{X}\vDash\neg\psi$.
$ii)$ is trivial: $\varphi$ is undefined in $(\mathsf{K},w_{X})$
iff it is so in $(H(\mathsf{M}),w_{X})$.

$\varphi:=\psi\wedge\chi$. The case follows by tracing \emph{iff}s
through the definitions of $\Vdash$, $H$-transform, and $\Vdash$.

$\varphi:=K_{a}\psi$. $\mathsf{\mathsf{K}},w_{X}\Vdash K_{a}\psi$
iff (def. of $\Vdash$) $\pi_{a}(w_{X})R_{Ya}v_{Y}$ implies $\mathsf{K},v_{Y}\Vdash\varphi$
iff (def. of $\pi_{a}$, i.e. $\pi_{a}(w_{X})=w_{Y}$ and def. of
$I_{a}$), for all $v_{Y}$ s.t. $(w_{Y},v_{Y})\in R_{Ya}$, i.e.
for all $v_{Y}\in I_{a}(w_{Y})$, $\mathsf{K},v_{Y}\Vdash\varphi$
iff (def. of $\Pi_{a}$, i.e. $\Pi_{a}(w_{X})=I_{a}(\pi_{a}(w_{X})=I_{a}(w_{Y})$)
$\Pi_{a}(w_{X})\subseteq\llbracket\psi\rrbracket$ iff (def. of $\boldsymbol{K}_{a}$)
$w\in\boldsymbol{K}_{a}(\llbracket\psi\rrbracket)$ iff (def. of $\vDash$)
$H(\mathsf{K}),w_{X}\vDash K_{a}\psi$.
\end{proof}

\section{\label{sec:Logic}The HMS Logic of Kripke Lattice Models with Equivalence
Relations}

As we may transition back-and-forth between HMS models and Kripke
lattice models with equivalence relations in a manner that preserve
satisfaction of formula of $\mathcal{L}$, soundness and completeness
of a $\mathcal{L}$-logic is also transferable between the model classes.
We thereby show such results for Kripke lattice models with equivalence
relations as a corollary to results by HMS \cite{HMS2008}.
\begin{definition}
The logic $\Lambda_{HMS}$ is the smallest set of $\mathcal{L}$ formulas
that contain the axioms in, and is closed under the inference rules
of, Table \ref{tab:logic}.
\end{definition}

\begin{table}
\vspace{-24pt}%
\begin{tabular}{|>{\raggedright}p{0.98\textwidth}|}
\hline 
{\small{}All substitution instances of propositional logic, including
the formula $\top$}{\small\par}

{\small{}$A_{a}\neg\varphi\leftrightarrow A_{a}\varphi$\hfill{}(Symmetry)}{\small\par}

{\small{}$A_{a}(\varphi\wedge\psi)\leftrightarrow A_{a}\varphi\wedge A_{a}\psi$\hfill{}(Awareness
Conjunction)}{\small\par}

{\small{}$A_{a}\varphi\leftrightarrow A_{a}K_{b}\varphi$, for all
$b\in Ag$\hfill{} (Awareness Knowledge Reflection)}{\small\par}

{\small{}$K_{a}\varphi\rightarrow\varphi$\hfill{} (T, Axiom of Truth)}{\small\par}

{\small{}$K_{a}\varphi\rightarrow K_{a}K_{a}\varphi$\hfill{} (4,
Positive Introspection Axiom)}{\small\par}

{\small{}From $\varphi$ and $\varphi\rightarrow\psi$, infer $\psi$
\hfill{}(Modus Ponens)}{\small\par}

{\small{}For $\varphi_{1},\varphi_{2},...,\varphi_{n},\varphi$ that
satisfy $At(\varphi)\subseteq\bigcup_{i=1}^{n}At(\varphi_{i})$,}{\small\par}

{\small{}from $\bigwedge_{i=1}^{n}\varphi_{i}\rightarrow\varphi$,
infer $\bigwedge_{i=1}^{n}K_{a}\varphi_{i}\rightarrow K_{a}\varphi$
\hfill{}(RK-Inference)}\tabularnewline
\hline 
\end{tabular}\vspace{2pt}

\caption{\label{tab:logic}Axioms and inference rules of the HMS logic of unawareness,
$\Lambda_{HMS}$.}
\vspace{-24pt}
\end{table}
As the the $L$-transform of an HMS model has equivalence relations,
one may be surprised by the lack of the standard negative introspection
axiom $5:(\neg K_{a}\varphi\rightarrow K_{a}\neg K_{a}\varphi)$ among
the axioms of $\Lambda_{HMS}$. However, including 5 would make collapse
awareness \cite{ModicaRustichini1994}. In \cite{HMS2008}, HMS remarks
that $\Lambda_{HMS}$ imply the weakened version $K_{a}\neg K_{a}\neg K_{a}\varphi\rightarrow(K_{a}\varphi\vee K_{a}\neg K_{a}\varphi)$,
which by the Modici-Rustichini definition of awareness is $K_{a}\neg K_{a}\neg K_{a}\varphi\rightarrow A_{a}\varphi$.
Defining unawareness by $U_{a}\varphi:=\neg A_{a}\varphi$, this again
equates $U_{a}\varphi\rightarrow\neg K_{a}\neg K_{a}\neg K_{a}\varphi$.
Additionally, HMS notes that if $\varphi$ is a theorem, then $A_{a}\varphi\rightarrow K_{a}\varphi$
is a theorem, that $4$ implies introspection of awareness ($A_{a}\varphi\rightarrow K_{a}A_{a}\varphi$),
while $\Lambda_{HMS}$ entails that \emph{awareness is generated by
primitives propositions}, i.e., that $A_{a}\varphi\leftrightarrow\bigwedge_{p\in At(\varphi)}A_{a}p$
is a theorem. The latter two properties entails that HMS awareness
is \emph{propositionally determined}, in the terminology of \cite{HalpernRego2008}.

Using the above given notion of validity and standard notions of proof,
soundness and strong completeness, HMS \cite{HMS2008} state that,
as standard,
\begin{lemma}
\label{lem:4.12}The logic $\Lambda_{HMS}$ is strongly complete with
respect to a class of structures $\mathfrak{S}$ iff every set of
$\Lambda_{HMS}$ consistent formulas is satisfied in some $\mathfrak{s}\in\mathfrak{S}$.
\end{lemma}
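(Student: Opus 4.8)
The plan is to prove the biconditional by the two standard contrapositive arguments linking provability to satisfiability, taking care only where the three-valued semantics intrudes. I read \emph{$\Lambda_{HMS}$ is strongly complete with respect to $\mathfrak{S}$} in the usual deductive sense: for every $\Gamma\subseteq\mathcal{L}$ and $\varphi\in\mathcal{L}$, if $\Gamma\vDash_{\mathfrak{S}}\varphi$ then $\Gamma\vdash\varphi$, where $\vdash$ is proof from Table~\ref{tab:logic} and, generalising the validity clause to premises, $\Gamma\vDash_{\mathfrak{S}}\varphi$ holds iff every pointed structure $(\mathfrak{s},w)$ with $\mathfrak{s}\in\mathfrak{S}$ that satisfies all of $\Gamma$ and decides every atom of $At(\varphi)$ also satisfies $\varphi$. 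A set $\Gamma$ is \emph{satisfied in $\mathfrak{s}$} iff some point $w$ has $\mathfrak{s},w\vDash\gamma$ for all $\gamma\in\Gamma$. Because the axioms include all propositional tautologies and the rules include Modus Ponens, consistency behaves classically on the syntactic side, so the only genuine care concerns the definedness proviso on the semantic side.

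For the right-to-left direction, assume every consistent set is satisfied in some $\mathfrak{s}\in\mathfrak{S}$ and establish strong completeness contrapositively. Suppose $\Gamma\not\vdash\varphi$. Then $\Gamma\cup\{\neg\varphi\}$ is consistent, since otherwise propositional reasoning inside $\Lambda_{HMS}$ recovers $\Gamma\vdash\varphi$. By hypothesis some pointed $(\mathfrak{s},w)$, $\mathfrak{s}\in\mathfrak{S}$, satisfies every formula of $\Gamma\cup\{\neg\varphi\}$. Here the three-valued clauses work in our favour: satisfying $\neg\varphi$ forces $At(\varphi)\subseteq X$, so $\varphi$ is defined at $w$ and false there while all of $\Gamma$ holds, and hence $(\mathfrak{s},w)$ witnesses $\Gamma\not\vDash_{\mathfrak{S}}\varphi$.

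For the left-to-right direction, assume strong completeness and show contrapositively that every consistent $\Gamma$ is satisfiable over $\mathfrak{S}$. If $\Gamma$ is satisfied in no $\mathfrak{s}\in\mathfrak{S}$, then no pointed structure over $\mathfrak{S}$ satisfies all of $\Gamma$, so $\Gamma\vDash_{\mathfrak{S}}\psi$ holds \emph{vacuously} for every $\psi$, in particular for the contradiction $\neg\top$ (which, having no atoms, is defined and false everywhere). Strong completeness then yields $\Gamma\vdash\neg\top$, making $\Gamma$ inconsistent; contraposing gives the claim.

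The argument is the textbook equivalence between the deductive and model-existence formulations of strong completeness, so the step I expect to demand attention — the main obstacle — is reconciling it with the non-standard notion of validity. One must confirm that $\Gamma\vDash_{\mathfrak{S}}\varphi$ is taken with the same definedness proviso as single-formula validity, so that a point satisfying $\neg\varphi$ genuinely refutes the consequence; and that vacuous consequence still licenses deriving an inconsistency. Both reduce to the facts that satisfaction of a negation forces definedness and that $\neg\top$ is underivable from any consistent theory, after which the classical skeleton goes through unchanged.
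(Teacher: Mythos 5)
The paper offers no proof of this lemma at all: it is stated as a standard fact and attributed to HMS \cite{HMS2008}, so there is nothing to compare against line by line. Your argument is the standard deductive-completeness/model-existence equivalence that the paper implicitly relies on, and it is correct; your only substantive addition is the check that the three-valued semantics does not break either direction, which you handle properly (satisfaction of $\neg\varphi$ forces $At(\varphi)\subseteq X$, so the witnessing point genuinely refutes $\Gamma\vDash_{\mathfrak{S}}\varphi$, and $\neg\top$ is everywhere defined and false, so vacuous consequence still yields inconsistency).
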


\noindent Let $\boldsymbol{M}$ be the class of HMS modes. Using a
canonical model, HMS show:
\begin{theorem}
[\hspace{-1pt}\hspace{1pt}\cite{HMS2008}]\label{thm:HMS-completeness}$\Lambda_{HMS}$
is sound and strongly complete with respect to $\boldsymbol{M}$.
\end{theorem}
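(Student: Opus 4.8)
The plan is to treat the two halves separately: soundness by a direct check that each axiom of Table~\ref{tab:logic} is valid over $\boldsymbol{M}$ and that each rule preserves validity, and strong completeness by a canonical-model construction in the style of HMS \cite{HMS2008}, verified against Lemma~\ref{lem:4.12}. Throughout, the delicate point is that validity here is the three-valued notion, so a formula need only be satisfied at states where all its atoms are defined; every clause must therefore be relativized to the state-space carrying the relevant vocabulary, and "has a defined truth value" must be tracked as membership in the appropriate fragment.

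For soundness I would proceed operator by operator. The propositional instances and Modus Ponens are immediate once restricted to states where the relevant atoms are defined. The axiom $K_a\varphi\rightarrow\varphi$ follows from \ref{HMS2:G.Ref}, since $w\in(\Pi_a(w))^{\uparrow}$ forces $w$ into any known event, and $K_a\varphi\rightarrow K_aK_a\varphi$ follows from \ref{HMS3:stationarity}. The three awareness axioms (Symmetry, Awareness Conjunction, Awareness Knowledge Reflection) are checked against the abbreviation $A_a\varphi:=K_a\varphi\vee K_a\neg K_a\varphi$ together with the behaviour of $\boldsymbol{A}_a$, which depends only on the state-space $S(D^{\uparrow})$ of its argument and hence only on $At(\varphi)$; this is precisely what makes HMS awareness propositionally determined, so all three reduce to the stability of $At(\varphi)$ under negation, conjunction of vocabularies, and the operator $K_b$. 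The one genuinely delicate rule is RK-Inference: its side condition $At(\varphi)\subseteq\bigcup_i At(\varphi_i)$ is exactly what keeps the conclusion inside a vocabulary where all premises are expressible, and I would establish its soundness using \ref{HMS4:PPI} and \ref{HMS5:PPK} to transport knowledge correctly across projections.

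For completeness I would build a canonical HMS model whose state-spaces are indexed by vocabularies $X\subseteq At$: let $\mathcal{L}_X$ be the fragment of $\mathcal{L}$ whose atoms lie in $X$, and let $S_X$ be the set of maximal $\Lambda_{HMS}$-consistent subsets of $\mathcal{L}_X$, ordered by inclusion of vocabularies. Each projection $r_{S_X}^{S_Y}$ (for $X\subseteq Y$) sends a maximal consistent set to its intersection with $\mathcal{L}_X$; surjectivity follows by Lindenbaum extension, commutativity and $r^{S_X}_{S_X}=Id$ are routine, and since restriction is onto we also get $|S_X|\le|S_Y|$. The possibility correspondence is the crux: for $w\in S_X$ read off the awareness vocabulary $X_a=\{p\in X\colon A_ap\in w\}$ (well-defined because awareness is generated by primitives) and set $\Pi_a(w)=\{v\in S_{X_a}\colon\{\psi\in\mathcal{L}_{X_a}\colon K_a\psi\in w\}\subseteq v\}$, the states of the agent's awareness-space compatible with what she knows. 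One then proves the truth lemma, that $\varphi\in w$ iff the canonical model satisfies $\varphi$ at $w$, for every $w\in S_X$ and every $\varphi\in\mathcal{L}_X$, by induction on $\varphi$; strong completeness is then immediate, since any $\Lambda_{HMS}$-consistent $\Phi$ extends to a maximal consistent set in $\mathcal{L}_{At(\Phi)}$, sits in the corresponding space, and is satisfied there, which is exactly the criterion of Lemma~\ref{lem:4.12}.

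The main obstacle is verifying that this structure is genuinely an HMS model, i.e. that $\Pi_a$ satisfies all five HMS properties. \ref{HMS1:confinement} holds since $X_a\subseteq X$; \ref{HMS2:G.Ref} uses the Truth axiom to place $w\cap\mathcal{L}_{X_a}$ inside $\Pi_a(w)$. The subtle case is \ref{HMS3:stationarity}: it asserts a partition-like (equivalence) behaviour that, in a classical canonical model, would demand axiom $5$ — which is deliberately absent, as including it collapses awareness. The resolution is that $5$ holds \emph{conditional on awareness}: within the fragment $\mathcal{L}_{X_a}$ the agent is fully aware, awareness is introspective, and there the derivable negative-introspection-under-awareness makes $K_a$ behave as in S5, which is exactly what yields stability of $X_a$ and of the known-formula set along $\Pi_a$. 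Properties \ref{HMS4:PPI} and \ref{HMS5:PPK} are the hardest, requiring one to show via RK-Inference and the awareness axioms that restricting a maximal consistent set to a smaller vocabulary neither manufactures nor destroys knowledge beyond what the discarded atoms force; simultaneously the partiality must be threaded through every inductive step. As the theorem is due to HMS, I would ultimately defer the full verification to \cite{HMS2008}, but the canonical construction above is the route I would reconstruct.
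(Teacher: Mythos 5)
The paper does not actually prove this theorem: it is imported verbatim from \cite{HMS2008}, with only the remark that HMS establish it ``using a canonical model.'' Your sketch reconstructs essentially that route---soundness by checking each axiom against the \nameref{text:HMSproperties}, completeness via a canonical model whose state-spaces are the maximal consistent sets of the vocabulary-restricted fragments $\mathcal{L}_X$, with projections given by intersection and $\Pi_a$ read off from the awareness vocabulary and the known formulas---so it matches the approach the paper attributes to HMS, and your deferral of the full verification (in particular of \nameref{HMS3:stationarity}, \nameref{HMS4:PPI} and \nameref{HMS5:PPK} for the canonical $\Pi_a$) to \cite{HMS2008} is appropriate here.
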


Let $\boldsymbol{KLM}_{EQ}$ be the class of Kripke lattice models
where all accessibility relations are equivalence relations. As a
corollary to Theorem \ref{thm:HMS-completeness} and our transformation
and equivalence results, we obtain
\begin{theorem}
$\Lambda_{HMS}$ is sound and strongly complete with respect to $\boldsymbol{KLM}_{EQ}$.
\end{theorem}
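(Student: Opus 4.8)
The plan is to obtain both halves as corollaries of the HMS result Theorem~\ref{thm:HMS-completeness}, transporting satisfaction across the two transforms. The four ingredients are: $L$ sends every HMS model into $\boldsymbol{KLM}_{EQ}$ (Propositions~\ref{prop:L-transform-is-KLM} and~\ref{prop:L-transform-has-EQ-R}); $H$ sends every model of $\boldsymbol{KLM}_{EQ}$ into $\boldsymbol{M}$ (Proposition~\ref{prop:H-transform-is-HMS}); and Propositions~\ref{prop:equivalence} and~\ref{prop:equivalence-2} say that a model and its transform satisfy exactly the same $\mathcal{L}$-formulas at corresponding states. Because these two equivalence propositions are biconditionals over \emph{all} of $\mathcal{L}$, the correspondence in particular respects the atomic formulas $p$ and $\neg p$ that control definedness, so the non-standard, three-valued notion of validity transports cleanly in both directions.

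For \textbf{soundness}, I would argue that validity over $\boldsymbol{M}$ entails validity over $\boldsymbol{KLM}_{EQ}$, whence every $\Lambda_{HMS}$-theorem, valid over $\boldsymbol{M}$ by Theorem~\ref{thm:HMS-completeness}, is valid over $\boldsymbol{KLM}_{EQ}$. Fix a theorem $\varphi$, a model $\mathsf{K}\in\boldsymbol{KLM}_{EQ}$, and a state $w_X$ satisfying $p$ or $\neg p$ for every $p\in At(\varphi)$. Pass to $H(\mathsf{K})\in\boldsymbol{M}$. By Proposition~\ref{prop:equivalence-2} the pointed models $(\mathsf{K},w_X)$ and $(H(\mathsf{K}),w_X)$ satisfy the same formulas, so $w_X$ is $\varphi$-defined in $H(\mathsf{K})$ as well; validity over $\boldsymbol{M}$ then gives $H(\mathsf{K}),w_X\vDash\varphi$, and a second application of Proposition~\ref{prop:equivalence-2} returns $\mathsf{K},w_X\Vdash\varphi$.

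For \textbf{strong completeness}, I would use the characterisation of Lemma~\ref{lem:4.12} instantiated at $\mathfrak{S}=\boldsymbol{KLM}_{EQ}$: it suffices to satisfy every $\Lambda_{HMS}$-consistent set $\Phi$ in some member of $\boldsymbol{KLM}_{EQ}$. Applying Theorem~\ref{thm:HMS-completeness} and Lemma~\ref{lem:4.12} at $\mathfrak{S}=\boldsymbol{M}$, there are an HMS model $\mathsf{M}$ and a state $w\in\Omega_{\mathsf{M}}$ with $\mathsf{M},w\vDash\psi$ for all $\psi\in\Phi$. Now form $L(\mathsf{M})\in\boldsymbol{KLM}_{EQ}$ and choose any $v\in\ell(w)$; Proposition~\ref{prop:equivalence} yields $L(\mathsf{M}),v\Vdash\psi$ for every $\psi\in\Phi$, so $\Phi$ is satisfiable in $\boldsymbol{KLM}_{EQ}$, and Lemma~\ref{lem:4.12} delivers strong completeness.

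The steps I expect to require the most care are the two places where the three-valued semantics and the idiosyncratic validity notion meet the transforms. First, I must check that the witness $v$ in the completeness step actually exists, i.e. that $\ell(w)\neq\emptyset$; this holds because the projections are surjective, so $(r^{T}_{S(w)})^{-1}(w)\neq\emptyset$. Second, I must be explicit that ``being defined for $\varphi$'' is preserved in both directions, which is exactly what guarantees that the appeal to validity over $\boldsymbol{M}$ is legitimate in the soundness step; both points are ultimately routine given that Propositions~\ref{prop:equivalence} and~\ref{prop:equivalence-2} quantify over all formulas, but they are the only spots where the argument is more than a mechanical chaining of the cited results.
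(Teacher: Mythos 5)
Your proposal is correct and follows essentially the same route as the paper: both halves are obtained by transporting satisfaction through the $H$- and $L$-transforms using Propositions~\ref{prop:H-transform-is-HMS}, \ref{prop:L-transform-is-KLM}, \ref{prop:L-transform-has-EQ-R}, \ref{prop:equivalence} and~\ref{prop:equivalence-2}, together with Theorem~\ref{thm:HMS-completeness} and Lemma~\ref{lem:4.12}. The only differences are cosmetic (the paper states soundness contrapositively and names the HMS canonical model explicitly in the completeness step), and your added checks on definedness and on $\ell(w)\neq\emptyset$ are sound and, if anything, slightly more careful than the paper's own write-up.
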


\begin{proof}
Soundness: The axioms of $\Lambda_{HMS}$ are valid $\boldsymbol{KLM}_{EQ}$.
We show the contrapositive. Let $\varphi\in\mathcal{L}$. If $\varphi$
is not valid in $\boldsymbol{KLM}_{EQ}$, then for some $\mathsf{K}\in\boldsymbol{KLM}_{EQ}$
and some $w$ from $\mathsf{K}$, $\mathsf{K},w\Vdash\neg\varphi$.
Then its $\mathsf{H}$-transform $H(\mathsf{K})$ is an HMS model
cf. Prop. \ref{prop:H-transform-is-HMS}, and $H(\mathsf{K}),w\vDash\neg\varphi$
cf. Prop. \ref{prop:equivalence-2}. Hence $\varphi$ is not valid
in the class of HMS models. The same reasoning implies that the $\Lambda_{HMS}$
inference rules preserve validity.

Completeness: Assume $\Phi\subseteq\mathcal{L}$ is a consistent set,
and let $\mathfrak{M}$ be the canonical model of HMS, with $\mathfrak{w}$
a state in $\mathfrak{M}$ that satisfies $\Phi$. This exists, cf.
\cite{HMS2008}. By Prop.s \ref{prop:L-transform-is-KLM} and \ref{prop:L-transform-has-EQ-R},
$L(\mathfrak{M})$ is in $\boldsymbol{KLM}_{EQ}$. By Prop. \ref{prop:equivalence},
for all $v\in\ell(\mathfrak{w})$, $L(\mathfrak{M}),v\Vdash\Phi$.
By Lemma \ref{lem:4.12}, $\Lambda_{HMS}$ is thus strongly complete
w.r.t. $\boldsymbol{KLM}_{EQ}$.
\end{proof}

\section{\label{sec:Concluding-Remarks}Concluding Remarks}

This paper has presented a Kripke model-based rendition of the HMS
model of awareness, and shown the two model classes equally general
w.r.t. $\mathcal{L}$, by defining transformations between the two
that preserve formula satisfaction. A corollary to this result is
completeness of the HMS logic for the introduced model class.

There are several issues we would like to study in future work:

In recasting the HMS model, we teased apart the epistemic and awareness
dimensions merged in the HMS possibility correspondences, and Propositions
\ref{prop:L-transform-is-KLM}, \ref{prop:L-transform-has-EQ-R} and
\ref{prop:H-transform-is-HMS} about $L$- and $H$-transforms show
that the HMS properties are satisfied iff each $\pi_{a}$ satisfies
\ref{our1-DownwardsProjection}, \ref{our2-Intro.Idem} and \ref{our3-NoSurpises},
and each $R_{a}$ is an equivalence relation. For a more fine-grained
property correspondence, the propositions' proofs show that each property
of one model is entailed by a strict subset of the properties of the
other. In some cases, the picture emerging is fairly clear: e.g.,
HMS' \ref{HMS1:confinement} is shown only using the restrictions
lattice construction (RLC) plus \ref{our1-DownwardsProjection} and
\emph{vice versa}; \ref{HMS5:PPK} uses only \ref{our3-NoSurpises}
and RLC, while \ref{HMS5:PPK} and \ref{HMS1:confinement} entail
\ref{our3-NoSurpises}. In other cases, the picture is more murky,
e.g., when we use \ref{HMS3:stationarity}, \ref{HMS4:PPI} and \ref{HMS5:PPK}
to show the seemingly simple symmetry of $R_{a}$. We think it would
be interesting to decompose properties on both sides to see if clearer
relationships arise.

There are two issues with redundant states in Kripke lattice models.
One concerns redundant restrictions, cf. Remark \ref{rem:KLM-redundant-states},
which may be solved by working with a more general model class, where
models may also be based on sub-orders of the restriction lattice.
A second one concerns redundant states. For example, in Figure~\ref{fig:Kripke-lattice},
$\mathtt{K}_{\emptyset}$ contains three `identical' states where
no atoms have defined truth values\textemdash $\mathtt{K}_{\emptyset}$
is bisimilar to a one-state Kripke model. As bisimulation contracting
each $\mathtt{K}_{X}$ may collapse states from which awareness maps
differ, one must define a notion of bisimulation that takes awareness
maps into consideration (notions of bisimulation for other awareness
models exists, e.g. \cite{vanDitmarsch2013}). Together with a more
general modal class definition, this could hopefully solve the redundancy
issues.

Though \cite{HMS2008,Schipper2014,HalpernRego2008} provide comparisons
of the HMS and LGA \cite{HalpernFagin88,HalpernRego2008} models,
we would like to make a direct comparison with the latter to understand
Kripke lattice models from an awareness function perspective. It would
then be natural to use the LGA language with awareness and implicit
knowledge as primitives over Kripke lattice models, which is possible
as they include objective states.

The HMS logic is complete for HMS models and Kripke lattice models
with equivalence relations. \cite{HalpernRego2008} prove completeness
for HMS models using a standard validity notion, a `$\varphi$ is
at least as expressive as $\psi$' operator and variants of axioms
$T$, $4$ and $5$. We are very interested in considering this system
and its weaker variants for Kripke lattice models, also with less~assumptions~on~the~relations.

Finally, issues of dynamics spring forth: first, whether existing
awareness dynamics may be understood on Kripke lattice models; second,
whether DEL action models may be applied lattice-wide with reasonable
results; and third, whether the $\pi_{a}$ maps may be thought in
dynamic terms, as they map \mbox{between models.}

\subsubsection*{Acknowledgments.}

We thank the reviewers for their keen eyes and productive comments.
The Center for Information and Bubble Studies is funded by the Carlsberg
Foundation. RKR was partially supported by the DFG-ANR joint project
\emph{Collective Attitude Formation} {[}RO 4548/8-1{]}.

\end{document}